\documentclass{article}

\PassOptionsToPackage{numbers, sort&compress}{natbib}

\usepackage[preprint]{neurips_2026}

\usepackage[utf8]{inputenc} 
\usepackage[T1]{fontenc}    
\usepackage[dvipsnames]{xcolor}
\usepackage{hyperref}       
\hypersetup{
    colorlinks=true,
    citecolor=PineGreen  
}
\usepackage{url}            
\usepackage{booktabs}       
\usepackage{amsfonts}       
\usepackage{nicefrac}       
\usepackage{microtype}      
\usepackage{xcolor}         
\usepackage{subcaption}
\usepackage{wrapfig}
\usepackage{enumitem}

\usepackage{amsmath}
\usepackage{amssymb}
\usepackage{mathtools}
\usepackage{amsthm}
\theoremstyle{plain}
\newtheorem{theorem}{Theorem}[section]
\newtheorem{proposition}[theorem]{Proposition}

\newtheorem{corollary}[theorem]{Corollary}
\theoremstyle{definition}
\newtheorem{definition}[theorem]{Definition}
\newtheorem{assumption}[theorem]{Assumption}
\theoremstyle{remark}

\usepackage{amsmath,amsfonts,bm}
\usepackage{upgreek}

\def\eqref#1{equation~\ref{#1}}

\def\1{\bm{1}}

\def\rrho{{\uprho}}

\def\rf{{\textnormal{f}}}

\def\rh{{\textnormal{h}}}

\def\rx{{\textnormal{x}}}
\def\ry{{\textnormal{y}}}
\def\rz{{\textnormal{z}}}

\def\rvc{{\mathbf{c}}}

\def\rvf{{\mathbf{f}}}
\def\rvg{{\mathbf{g}}}
\def\rvh{{\mathbf{h}}}

\def\rvx{{\mathbf{x}}}

\def\rvz{{\mathbf{z}}}

\def\vf{{\bm{f}}}

\def\vv{{\bm{v}}}

\def\vx{{\bm{x}}}

\def\mC{{\bm{C}}}

\DeclareMathAlphabet{\mathsfit}{\encodingdefault}{\sfdefault}{m}{sl}
\SetMathAlphabet{\mathsfit}{bold}{\encodingdefault}{\sfdefault}{bx}{n}

\DeclareMathOperator*{\argmax}{arg\,max}

\title{Human-AI Teaming Through the Lens of Calibration}

\author{
  \parbox[t]{0.45\linewidth}{\centering
    \textbf{Eric Nalisnick}\\
    \normalfont
    Department of Computer Science\\
    Johns Hopkins University\\
    \texttt{nalisnick@jhu.edu}
  }
  \And
  \parbox[t]{0.45\linewidth}{\centering
    \textbf{Chi Zhang}\thanks{Equal contribution.}\\
    \normalfont
    Department of Computer Science\\
    Johns Hopkins University\\
    \texttt{czhan168@jhu.edu}
  }
  \AND
  \parbox[t]{0.45\linewidth}{\centering
    \textbf{Sophia Qian}$^{*}$\\
    \normalfont
    Department of Computer Science\\
    Johns Hopkins University\\
    \texttt{cqian17@jh.edu}
  }
  \And
  \parbox[t]{0.45\linewidth}{\centering
    \textbf{Yixin Wang}\\
    \normalfont
    Department of Statistics\\
    University of Michigan\\
    \texttt{yixinw@umich.edu}
  }
}

\begin{document}

\maketitle

\begin{abstract}\looseness=-1
  We study models for human-AI teaming through the lens of statistical calibration.  We assume the team consists of an AI model and human---both of which are calibrated with respect to some partitioning of the feature space---and expose how the calibration assumptions propagate into the teaming framework.  In particular, we consider frameworks that either (i) combine human and model predictions or (ii) delegate prediction responsibility to either a human or model.  We show via theoretical and empirical results that existing methods for combination do not preserve the human's degree of calibration.  Methods for delegation (by the very act of delegation) preserve calibration of the downstream predictors but shift the burden onto the rejector meta-model that decides who predicts. The rejector must be calibrated finely enough to locate where each member is superior, a demand that grows with the human's expertise and becomes unattainable when the human relies on information the system cannot observe.
\end{abstract}

\looseness=-1
\section{Introduction}
\label{sec:intro}
Modern AI systems can now extract insights from large scale data sources in ways that far surpass human abilities.  However, humans still can contribute meaningful contextual information \citep{alur2024human, li2026scaling, recht2025actuary}.  For instance, a medical AI system can inform its diagnosis from millions of scientific publications and patient health records.  Yet this AI system cannot know a particular patient as well as a doctor who has been caring for this patient for years.  This complementarity in information---population-level trends vs nuanced, specific insights---motivates the use of \textit{hybrid intelligent} systems \citep{kamar2016directions}.  Ideally, these systems leverage human and machine decision-making such that the human-AI team's performance is better than either could achieve alone.  This essential property is called \textit{complementarity} \citep{donahue2022human, SteyversComplement}.

Human-AI teams have been observed to have complex dynamics, and a full characterization of when teaming succeeds and fails is still an open problem \citep{steyvers2024three}.  For example, \citet{Bansal2019Updates} reported a user study in which improving the AI model actually decreased team performance.  Thus one would hope for the team to exhibit certain robustness properties \citep{peng2025no}: for instance, the team's predictions should never be worse than its best member's.  Or in frameworks that delegate responsibility \citep{madras2018predict}, we would never want the act of delegation to depend on having a full characterization of the behavior of all team members.  If it did, then we could just use the delegation system to make predictions and disband the team.  While work has attempted to address these questions empirically \citep{towardsAScience} or theoretically in specific settings \citep{mozannar2023exact, donahue2022human, collina2026collaborative, guo2024decision}, no prior work has addressed these questions across multiple teaming settings from a unified theoretical perspective.  

In this work, we use statistical calibration as a unified theoretical lens through which to study human-AI teaming.  We consider two teaming frameworks: combining human and model predictions \citep{kerrigan2021} and delegating prediction tasks to one team member \citep{madras2018predict}.  We assume that the team is comprised of one model and one human---both of which are calibrated for some partitioning of the feature space.  Calibration is a powerful tool for studying teaming as it allows for a general representation of the human and model's prediction quality.  As the partitioning becomes increasingly fine-grained, the human / model approaches the Bayes predictor.  As the partitioning becomes increasingly coarse, the human / model approaches marginal calibration (i.e.~uniform confidence of $1/K$ for balanced $K$-class classification).  Through a series of theoretical results, simulations, and experiments with authentically human-made predictions, we expose how the calibration assumptions propagate into the team's predictive behavior.  In particular, we arrive at mostly \emph{negative} results that show common teaming procedures do not preserve the underlying calibration of the team members.  To summarize our contributions, we:\looseness=-1\begin{itemize}[leftmargin=10pt]
    \item Prove that model-based combination preserves calibration w.r.t.~the model (Proposition \ref{prop:cal_for_f}) but not  w.r.t.~the human (Theorem \ref{thm:negative}).  We verify this result via simulation (Figure \ref{fig:model_comb_sim}).
    \item Prove that the combination approach of \citet{kerrigan2021} preserves calibration w.r.t.~the model (Theorem \ref{thm_cal_bayes_f}) but not  w.r.t.~the human (Proposition \ref{thm:neg_bayes_h}).  Yet this preservation is only under independent partitioning, and as the model approaches the Bayes classifier, independence is violated and the team predictor becomes increasingly miscalibrated (Proposition \ref{prop_post_miscal_f_bayes}).  Experiments on human predictions via ImageNet-16H  agree with our theoretical finding (Figure \ref{fig:empirical_validation}).
    \item Prove that delegation-based teaming \citep{pmlr-v119-mozannar20b} requires the meta-classifier (a.k.a.~rejector) be sufficiently calibrated as to characterize where the human and model are the superior predictors (Theorem \ref{thm:cal_rejector}).  When the human has access to hidden features, the rejector will likely have irreducible excess risk.
\end{itemize}

\looseness=-1
\section{Background \& Related Work}\looseness=-1
\paragraph{Notation \& Setting} We consider a $K$-class classification task with labels $\ry \in \mathcal{Y} = \{1, \ldots, K\}$, \emph{known} features $\rvx \in \mathfrak{X}$.  We will consider two settings: shared and independent feature spaces.  For the former, the labels are drawn from an unknown distribution $\mathbb{P}(\ry | \rvx)$, and for the latter, the labels are drawn from $\mathbb{P}(\ry | \rvx, \rvz)$, with $\rvz \in \mathcal{Z}$ being features that are known only to the human.  The human makes predictions $\rh \in \mathcal{H} = \mathcal{Y}$ by sampling from an internal predictive model of the form $\mathbb{P}(\rh | \rvx)$, $\mathbb{P}(\rh | \rvz)$, or $\mathbb{P}(\rh | \rvx, \rvz)$---depending on if the human has access to the same ($\rvx$), a different ($\rvz$), or a strictly richer ($\rvx, \rvz$) feature space than the model.  We observe the known features and labels only through i.i.d.~samples $(\vx , y)$, and thus if $\rvz$ contains essential predictive information, a model trained only on $(\vx , y)$ cannot recover the ground-truth distribution $\mathbb{P}(\ry | \rvx, \rvz)$.  Predictive models are considered to be a map $\rvf: \mathfrak{X} \mapsto \Delta^{K-1}$, where $\Delta^{K-1}$ is the (K-1)-dimensional simplex. The model's distribution over class labels is then $\ry \sim \text{\texttt{Categorical}}(\rvf(\rvx))$, where $\rf_{y}(\rvx)$ denotes the classifier's confidence assigned to label $y$ and is usually parameterized with the softmax function.   

\looseness=-1
\subsection{Statistical Calibration}\label{sec:cal_intro}
We now introduce statistical calibration \citep{murphy1967verification, dawid1982well} for an abstract classifier $\rvf:\mathfrak{X} \mapsto \Delta^{K-1}$. Let $\Phi: \mathfrak{X} \mapsto \mathcal{U}$ be a \textit{grouping function} that partitions the feature space $\mathfrak{X}$ into reference classes by assigning each point to a group indexed by $u \in \mathcal{U}$.
\begin{definition}\label{def:cal}  \ \textbf{Calibration.}
A predictor $\rvf$ is (first-order) calibrated w.r.t.~a choice of $\Phi$ if \begin{equation*}   f_{y}(\vx) \ = \ \mathbb{E}\left[ \mathbb{P}(\ry=y | \rvx) \ | \ \rvx \in [\vx]_{\Phi} \right] \ = \ \mathbb{P}\left(\ry=y | \Phi(\vx)\right), \ \ \forall \vx \in \mathfrak{X}, \ \forall y \in \mathcal{Y}. 
    \end{equation*}\end{definition} where $[\vx]_{\Phi} = \left\{\vx' | \Phi(\vx) = \Phi(\vx')  \right\} \subseteq \mathfrak{X}$ denotes the equivalence class of $\rvx$.  Therefore $\rvf$ is calibrated if the confidence it assigns to every $(\vx, y)$ pair equals the \textit{average} probability of that label within the group to which $\vx$ is assigned.  Appropriately choosing $\Phi$ is known as the \textit{reference class problem} \citep{hajek2007reference} as the choice of $\Phi$ naturally gives rise to stronger and weaker forms of calibration \citep{pmlr-v89-vaicenavicius19a, 10.1145/3593013.3594068}.  The strongest form is \textit{ideal calibration}, which means the predictor is the Bayes predictor, having matched all underlying conditional probabilities: $f_{y}(\vx) \ = \ \mathbb{P}\left(\ry=y | \vx \right).$ Ideal calibration is `strong' because the grouping function is `fine grained,' having $\Phi(\rvx) = \rvx$.   On the other hand, a model is weakly calibrated if calibration only holds for a `coarse grained' grouping function.  The weakest form is marginal calibration, which occurs when $\Phi(\rvx)$ partitions the feature space into a single group: $f_{y}(\vx) = \mathbb{P}(\ry = y)$.  In the machine learning literature, \textit{confidence calibration} is typically of interest, which uses the maximum class confidence as the grouping function: $\Phi(\rvx) = \max_{y} f_{y}(\rvx)$ \citep{guo2017calibration}.  If a predictor is calibrated for multiple grouping functions, then it is called \textit{multi-calibrated}  \citep{pmlr-v80-hebert-johnson18a}.

Definition \ref{def:cal} presumes that the predictor is itself measurable w.r.t.~$\Phi$ (i.e.~$\rvf$ is a function of $\Phi(\rvx)$).  Yet we will also want to know if a predictor  that is built from features that are not a function of $\Phi$ is still calibrated w.r.t.~$\Phi$.  For such predictors, we can use the following conditional notion of calibration, which coincides with Definition \ref{def:cal} whenever the predictor is $\Phi$-measurable.
\begin{definition}  \ \textbf{Generalized Calibration.}\label{def:comp_cal}
Let $\rvg$ be a predictor that is a measurable function of inputs generating a $\sigma$-algebra $\mathcal{S}$ (so that $\sigma(\rvg) \subseteq \mathcal{S}$), and let $\Phi$ be a grouping function.  We say $\rvg$ is calibrated w.r.t.~$\Phi$ if conditioning on $\Phi$ leaves the prediction unchanged:
\begin{equation*}
\rvg \ = \ \mathbb{P}\!\left(\ry \mid \rvg, \Phi(\rvx) \right) \ = \ \mathbb{P}\!\left(\ry  \mid \rvg \right).\end{equation*} \end{definition} This statement means that $\Phi$ carries no label information beyond what $\rvg$ already encodes.

\looseness=-1
\subsection{Frameworks for Human-AI Teaming}  While there are many ways to form human-AI teams and design hybrid intelligent systems, there are three fundamental workflows from which the vast majority of approaches can be derived \citep{10.1145/3802522}.  See Figure \ref{fig:haic_workflow} for a diagram of each.  The first, shown in Subfigure (a), is (human) prediction with AI assistance, and this is by far the most popular workflow \citep{kleinbergDecisions, zhang2020effect, bansal2021most, wholeExceed, towardsAScience}.  For example, a radiologist using computer vision to inform their analysis of a medical image would fit into this paradigm \citep{yu2024heterogeneity}.  Receiving help via discussions with a chatbot would be another example.  The second, shown in Subfigure (b), is less commonly found in the literature but rather obvious: combine the machine and human predictions using some voting or other ensembling rule (or even an additional model) \citep{CLEMEN1989559, kamarCrowds, kerrigan2021, pmlr-v238-showalter24a, pmlr-v267-kelly25a}.  This workflow was exploited by \citet{astrocombo} to identify images of supernovae by combining classifications from volunteers and a ConvNet.  The last, shown in Subfigure (c), is also popular: delegate prediction-making to either the human or machine \citep{madras2018predict, pmlr-v119-mozannar20b, Wilder2020LearningTC, okati2021differentiable}.  This workflow allows for semi-automation as hopefully the model can make the majority of predictions, and the human only needs to be queried sparingly.  This framework is now commonly employed for online content moderation: content that is obviously prohibited is usually detected by a model, and the ambiguous cases are deferred to a human, who can leverage cultural context and common sense to make the final determination \citep{10.1145/3491102.3501999}.

\begin{figure}
    \centering
	\begin{subfigure}[b]{.33\textwidth}
		\centering
        \includegraphics[width=.8\textwidth]{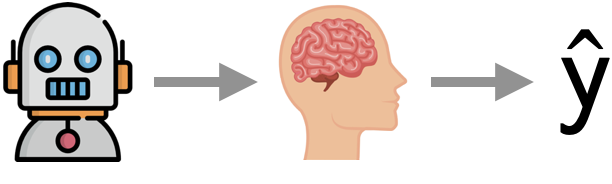} \\
        \subcaption{Prediction with AI Assistance}
        \label{fig:haic_workflow1}
	\end{subfigure}
	\hfill
    \begin{subfigure}[b]{.3\textwidth}
		\centering
        \includegraphics[width=0.5668\textwidth]{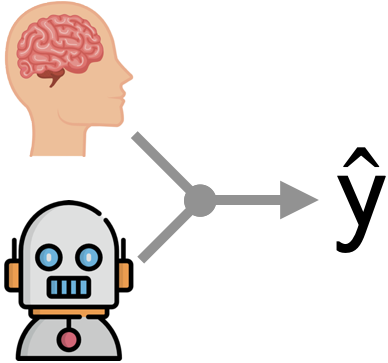} \\
        \subcaption{Combining Predictions}
        \label{fig:haic_workflow2}
	\end{subfigure}
	\hfill
    \begin{subfigure}[b]{.3\textwidth}
		\centering
        \includegraphics[width=.76\textwidth]{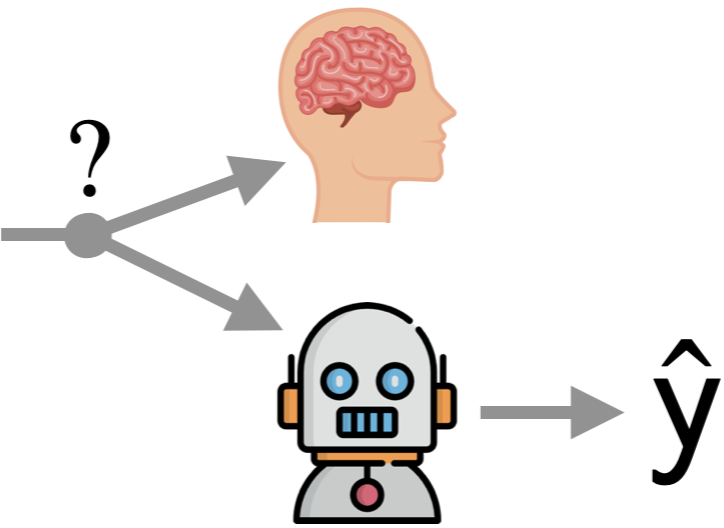} \\
        \subcaption{Delegating Predictions}
        \label{fig:haic_workflow4}
	\end{subfigure}
    \caption{\textit{Workflows for Human-AI Teaming}. $\hat{\ry}$ denotes the prediction.  Subfigure (a) shows \textit{AI assistance}: the human's decision is informed by the model's output.  Subfigure (b) shows \textit{combination}: the human and model's predictions are combined using some ensembling procedure.  Subfigure (c) shows \textit{delegation}: each prediction is delegated to the human or model.}
    \label{fig:haic_workflow}
\end{figure}

\looseness=-1
\subsubsection{AI Assistance and Prior Work on Calibration} Returning to the assistance framework in Subfigure \ref{fig:haic_workflow1}, this formulation of teaming has already been well-studied from the perspective of calibration.  Several user studies have been performed during which users were shown confidence scores along with the AI's predictions \citep{zhang2020effect, wholeExceed, towardsAScience, 10.1145/3706598.3713336}, with the belief that confidence scores that are better calibrated should improve the quality of assistance \citep{tejeda2023displaying}.  However both \citet{vodrahalli2022uncalibrated} and \citet{cao2024designing} observed that this is not the case in practice.  \citet{vodrahalli2022uncalibrated} found that humans tended to perform better when confidences are skewed toward under/over confidence.  \citet{cao2024designing} observed benefits from using confidence scores but these benefits were agnostic to if the score was calibrated.  \citet{Benz2023Aligned} provide a theoretical framing of this phenomenon that explains why traditional confidence calibration did not improve the team's performance.  If the human has access to additional information ($\rvz$, in our notation), then calibrating the assistance model only with respect to $\Phi(\mathfrak{X})$ is insufficient.  Rather, a model with good assistance properties should be multi-calibrated w.r.t.~the human's own confidence scores, thereby accounting for the hidden features.  This work is our primary inspiration for examining other teaming workflows through the lens of calibration and shared vs independent feature spaces.  Lastly, \citet{peng2025no} characterized a no-free-lunch result for AI assistance, showing that when given calibrated confidences from two or more predictors, there does \emph{not} exist a (non-trivial) deterministic aggregation mechanism that can never perform worse than the worse predictor.

\looseness=-1
\subsubsection{Combining AI and Human Predictions}\label{subsec:combo}
We next turn to frameworks that combine human and model predictions.  These will be the first target of our analysis.  We will examine two specific combination methods from the literature: model-based combination \citep{astrocombo} and Bayes combination of classifiers and confusion matrices \citep{kerrigan2021}.

\looseness=-1
\paragraph{Model-Based Combination}  We start with the obvious approach to combination.  Given a dataset $\mathcal{D} = \{\vx_n , y_n, h_n\}_{n=1}^{N}$, first fit a model $\vf(\rvx)$ to $\rvx$-$\ry$ pairs and then combine its predictions with a human's via a second model.  We define this second model as $g: \Delta^{K-1} \times \mathcal{H} \mapsto \Delta^{K-1}$.  \citet{astrocombo} used a simple combination rule and a support vector machine to implement $g(\rvx, \rh)$.  This setting is attractive for its simplicity and will serve as a useful `warm up' for our line of analysis. 

\looseness=-1
\paragraph{Bayes Combination of Classifiers and Confusion Matrices} \citet{kerrigan2021} proposed a more elegant approach to combination.  Assume access to two datasets: one of feature-label pairs $\mathcal{D}_{f} = \{(\vx_n , y_n)\}_{n=1}^{N}$ and another of prediction-label pairs $\mathcal{D}_{h} = \{(h_m , y_m)\}_{m=1}^{M}$.  They first fit a classifier $f(\rvx)$ to $\mathcal{D}_{f}$.  Then they use $\mathcal{D}_{h}$ to estimate a row-normalized confusion matrix $\mC$: $c_{i,j} = \sum_m (\mathbb{I}[y_m = i] \cdot \mathbb{I}[h_m = j]) / \sum_m \mathbb{I}[y_m = i] $. \citet{kerrigan2021} lastly combine the two models via Bayes rule as follows: \begin{equation}\begin{split}
    p(\ry | \rh, f(\rvx)) \ &= \ \frac{p(\rh | \ry, f(\rvx)) \cdot p(\ry | f(\rvx))}{p(\rh | f(\rvx))} \\
   \text{\textcolor{gray}{(assuming $\ f(\rvx) \ \bot \ \rh \ | \ \ry$)}} \ \ \ &= \ \frac{p(\rh | \ry) \cdot p(\ry | f(\rvx))}{\sum_{y' \in \mathcal{Y}} p(\rh | \ry=y') \cdot p(\ry=y' | f(\rvx))} \ = \ \frac{c_{\ry, \rh} \cdot f_{\ry
    }(\rvx)}{\sum_{y' \in \mathcal{Y}}c_{y', \rh} \cdot f_{y'
    }(\rvx) }
\end{split}
\end{equation} where the second equality assumes that the human's predictions are independent of $f(\rvx)$.  This assumption is then what allows for a standard confusion matrix to encode $p(\rh | \ry)$. 


\looseness=-1
\subsubsection{Delegating Predictions: Learning to Defer}\label{sec:l2d_bg}
The third workflow for teaming consists of delegating each prediction task to either the human or model via a routing policy \citep{raghu2019algorithmic, 10687308, de2020regression, de2021classification}.  This paradigm is known as \textit{learning to defer} (L2D) \citep{madras2018predict}, as the system must learn when to use which downstream predictor.  This allows for semi-automation, with the human only needing to make a (usually small) fraction of the decisions.  We can derive the Bayes optimal deferral rule as follows \citep{pmlr-v119-mozannar20b}. Consider the following cost function for a decision rule $\delta:\mathcal{X} \rightarrow \mathcal{Y}^{\bot}$, where $\bot$ represents the decision to defer to the human and $\mathcal{Y}^{\bot} = \mathcal{Y} \cup \{\bot\}$: $c_{\text{L2D}}\left(\delta(\rvx), \ry, \rh \right)  \triangleq   \left\{\ell(\delta(\rvx), \ry)  \text{ if } \delta(\rvx) \ne \bot; \ \ell(\rh, \ry)  \text{ if } \delta(\rvx) = \bot \right\}$.  If the decision rule returns a label $\ry \in \mathcal{Y}$, then we apply $\ell(\delta(\rvx), \ry)$, a loss between the classifier's prediction and the true label $\ry$.  When the decision is to defer, $\delta(\rvx) = \bot$, then we apply the same loss to the human's prediction: $\ell(\rh, \ry)$.  Assuming a 0-1 loss, the decision function that minimizes the conditional risk is then: \begin{equation}\label{eq:l2d_defer}
    \delta_{0-1}^{*}\left(\rvx\right) \  \triangleq \   
\begin{cases}  
\bot  \ \text{ if } \ r^{*}(\vx)=1  \\
\argmax_{y} \mathbb{P}(\ry = y| \rvx) \ \text{ o/w }
\end{cases} r^{*}(\vx) \ \triangleq \ \mathbb{I}\left[\mathbb{P}(\rh = \ry | \vx) > \max_{y \in \mathcal{Y}} \mathbb{P}(\ry = y | \vx) \right]\end{equation} where $r^{*}(\vx)$ is a meta-classifier termed the \textit{rejector} and $\mathbb{P}(\rh = \ry | \rvx) = \sum_{y \in \mathcal{Y}} \mathbb{P}(\ry = y | \rvx) \cdot \mathbb{P}(\rh = y | \rvx)$---the human's (binary) probability of being correct.  If this probability is greater than the modal confidence of the Bayes predictor $\mathbb{P}(\ry| \rvx)$, deferring to the human is the best decision. Yet one may wonder how the human can outperform $\mathbb{P}(\ry| \rvx)$.  This is where additional information plays a role: the human can beat the Bayes predictor because of her access to a richer feature space $[\rvx, \rvz]$.  Much of the prior work on L2D has developed surrogate loss functions for the above learning problem \citep{pmlr-v119-mozannar20b}.  The most closely related work is by \citet{verma2022calibrated} and \citet{cao2023in}, who developed parameterizations that resulted in better confidence-calibrated estimates of $\mathbb{P}(\rh = \ry | \rvx)$.  We, on the other hand, will analyze L2D's calibration properties from a more general, theoretical perspective.


\looseness=-1
\subsection{Assumptions on Human and Model}
To finalize our setting of interest, we state the calibration assumption that will be the foundation of our analysis.  Following prior work in human-AI teaming \citep{kerrigan2021, pmlr-v119-mozannar20b}, we assume we do not have access to the human's (conditional) predictive distribution $\mathbb{P}(\rh | \rvx)$ \emph{nor can we model it well}.  If we could, the human could simply be replaced by a model.  Rather, we observe only samples, $\hat{\rh} \sim \mathbb{P}(\rh | \rvx)$.  Next we assume that both the human and model are calibrated for some arbitrary grouping function:
\begin{assumption}\label{fund_assump} \textbf{Calibration of Human and Model.} Assume the model $\rvf$ is calibrated and measurable w.r.t.~grouping function $\Phi_{f}$: $\rvf(\rvx) = \mathbb{P}(\ry | \Phi_{f}(\rvx))$.  Likewise, assume the human's internal predictive model is calibrated and measurable w.r.t.~grouping function $\Phi_{h}$: $\mathbb{P}(\rh | \cdot) = \mathbb{P}(\ry | \Phi_{h}(\cdot))$, where $\Phi_{h}$ could partition $\mathfrak{X}$, $\mathcal{Z}$, or their combined feature space. 
\end{assumption}  Assuming the human provides only a sample $\hat{\rh}$ allows our analysis to proceed without the human providing calibrated confidence statements---which humans are typically not good at \citep{lichtenstein1982calibration, tetlock2016superforecasting}.  However, despite their dubious confidence statements, humans do often make good decisions and predictions for domains in which they have experience \citep{kahneman2009conditions}, and these decisions can be characterized by a well-formed probabilistic model \citep{griffiths2006optimal, predictionmarkets}.  We stress that our Assumption \ref{fund_assump} does not assume the human or model are optimal predictors.  Rather, they are simply `unbiased with respect to each cell of the partition,' with Bayes optimality happening only when the cells become infinitesimal.

\looseness=-1
\section{Combination Through the Lens of Calibration}\label{sec:combination}We first study perhaps the simplest instantiation of teaming: pure model-based combination.  We then move on to the combination approach proposed by \citet{kerrigan2021}.  For both methods, we are interested if the combined human-AI predictor preserves calibration w.r.t~$\Phi_{f}(\rvx)$ and $\Phi_{h}(\rvx)$.  


\looseness=-1
\subsection{Model-Based Combination}
Given the combination function $g$ from Section \ref{subsec:combo}, can $g$ be calibrated against $\Phi_h$ and / or $\Phi_f$?  If it is calibrated against both, then the human-AI team forms a calibrated predictor that successfully captures all `knowledge' encoded by $\Phi_h$ and $\Phi_f$.  Let the Bayes-optimal combination function be denoted $g^{*}(\hat{\rh}, \rvf(\rvx)) \triangleq \mathbb{P}(\ry \mid \hat{\rh},\, \rvf(\rvx))$.  Restricting our attention to $g^{*}$ produces the cleanest possible upper bound on what a calibrated $g$ can inherit.  Our result will show that $g^{*}$ cannot be calibrated w.r.t.~$\Phi_h$, as long as $\mathbb{P}(\rh | \Phi_{h}(\rvx))$ is not deterministic and $f(\rvx)$ does not `leak' information about $\Phi_h$.

\looseness=-1
\paragraph{Shared Feature Space}  We first consider the case of shared features: when the human and model operate on $\mathfrak{X}$. Consider the sampling step $\hat{\rh} \sim \text{\texttt{Categorical}}(\mathbb{P}(\rh | \Phi_{h}(\rvx)))$.  This step replaces a real-valued probability vector $\mathbb{P}(\rh | \Phi_{h}(\rvx))$ with a single categorical representation $\hat{\rh} \in \mathcal{Y}$. Of course, many independent draws would converge back to $\mathbb{P}(\rh | \Phi_{h}(\rvx))$, but the composite predictor has access to exactly one sample.  On the other hand, $g$ is allowed full access to $f$'s per-class confidences, so any partition $\Phi_{f}$ that is \emph{at least as coarse as} $f$'s output (e.g.~confidence calibration) is  visible to $g$. \begin{proposition} \textbf{Calibration of $g^{*}$ w.r.t.~$\Phi_{f}$.} \label{prop:cal_for_f} Assume $\Phi_f$ is a coarsening of $\rvf$'s level sets, i.e.\ $\sigma(\Phi_f) \subseteq \sigma(\rvf)$. Then $\rvg^{*}$ is calibrated w.r.t.~$\Phi_f$.\end{proposition}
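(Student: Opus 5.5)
We use Definition \ref{def:comp_cal} with $\rvg = \rvg^{*} = \mathbb{P}(\ry \mid \hat{\rh}, \rvf(\rvx))$, a measurable function of the pair $(\hat{\rh}, \rvf(\rvx))$. Let $\mathcal{S} = \sigma(\hat{\rh}, \rvf(\rvx))$. We must show two equalities: $\rvg^{*} = \mathbb{P}(\ry \mid \rvg^{*}, \Phi_f(\rvx))$ and $\rvg^{*} = \mathbb{P}(\ry \mid \rvg^{*})$. Both will follow from the tower property once we observe that $\sigma(\rvg^{*}) \subseteq \sigma(\rvg^{*}, \Phi_f(\rvx)) \subseteq \mathcal{S}$. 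That inclusion chain is the crux of the argument.

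\textbf{Step 1 (the inclusion).} By hypothesis $\sigma(\Phi_f) \subseteq \sigma(\rvf)$, so $\Phi_f(\rvx)$ is (a.s.) a measurable function of $\rvf(\rvx)$. Hence $\sigma(\Phi_f(\rvx)) \subseteq \sigma(\rvf(\rvx)) \subseteq \mathcal{S}$. Since $\rvg^{*}$ is itself $\mathcal{S}$-measurable, it follows that $\sigma(\rvg^{*}, \Phi_f(\rvx)) \subseteq \mathcal{S}$.

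\textbf{Step 2 (tower property).} For any sub-$\sigma$-algebra $\mathcal{G}$ with $\sigma(\rvg^{*}) \subseteq \mathcal{G} \subseteq \mathcal{S}$,
\begin{equation*}
\mathbb{P}(\ry = y \mid \mathcal{G}) \;=\; \mathbb{E}\!\left[\mathbb{P}(\ry = y \mid \mathcal{S}) \mid \mathcal{G}\right] \;=\; \mathbb{E}\!\left[g^{*}_y \mid \mathcal{G}\right] \;=\; g^{*}_y ,
\end{equation*}
where the last equality holds because $g^{*}_y$ is $\mathcal{G}$-measurable. Taking $\mathcal{G} = \sigma(\rvg^{*}, \Phi_f(\rvx))$ gives the first equality of Definition \ref{def:comp_cal}. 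Taking $\mathcal{G} = \sigma(\rvg^{*})$ gives the second. Together these show that $\Phi_f$ carries no label information beyond $\rvg^{*}$, which proves the claim.

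\textbf{Main obstacle.} The only delicate point is Step 1: we must ensure that the grouping $\Phi_f$ is actually recoverable from what $\rvg^{*}$ conditions on. Assumption \ref{fund_assump} alone makes $\rvf$ measurable w.r.t.\ $\Phi_f$, i.e.\ $\sigma(\rvf) \subseteq \sigma(\Phi_f)$, which is the wrong direction. This is exactly why the proposition imposes the coarsening hypothesis $\sigma(\Phi_f) \subseteq \sigma(\rvf)$.

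We should also state the result modulo null sets and fix regular versions of the conditional probabilities. As a consistency check, under both inclusions $\sigma(\Phi_f) = \sigma(\rvf)$. Then Assumption \ref{fund_assump} gives $\mathbb{P}(\ry \mid \rvf) = \rvf$, so $\rvf$ itself is calibrated; the argument above shows this property survives the additional conditioning on $\hat{\rh}$.
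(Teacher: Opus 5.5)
Your proof is correct and is essentially the paper's argument. Both observe that $\sigma(\Phi_f) \subseteq \sigma(\rvf) \subseteq \mathcal{S} = \sigma(\hat{\rh}, \rvf(\rvx))$, then apply the tower property to obtain $\mathbb{P}(\ry \mid \rvg^{*}, \Phi_f) = \rvg^{*} = \mathbb{P}(\ry \mid \rvg^{*})$, and your remark that Assumption~\ref{fund_assump} alone gives the wrong direction of measurability is accurate.
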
\begin{proof}$\rvg^{*} = \mathbb{P}(\ry \mid \hat{\rh}, \rvf(\rvx))$ has input $\sigma$-algebra $\mathcal{S} = \sigma(\hat{\rh}, \rvf(\rvx))$, and
$\sigma(\Phi_{f}) \subseteq \sigma(\rvf) \subseteq \mathcal{S}$.  Since $\rvg^{*}$ and
$\Phi_{f}$ are both $\mathcal{S}$-measurable, the tower property gives
$\mathbb{P}(\ry \mid \rvg^{*}, \Phi_{f}) = \mathbb{E}[\,\mathbb{P}(\ry \mid \mathcal{S}) \mid \rvg^{*}, \Phi_{f}\,] = \rvg^{*} = \mathbb{P}(\ry \mid \rvg^{*})$.  Therefore $\rvg^{*}$ is calibrated w.r.t.~$\Phi_{f}$ (Definition \ref{def:comp_cal}).\end{proof}This asymmetry between $\hat{\rvh}$ and $f$ is the central limitation of this composition framework: while calibration w.r.t.~the model's partitions is possible, calibration w.r.t~the human's is not. \begin{theorem} \textbf{$g^{*}$ is not calibrated w.r.t.~$\Phi_h$.}
\label{thm:negative}
Assume that $g^{*}$ is injective on the support of $( \hat{\rh}, \rvf(\rvx) )$.  Suppose there exist two cells $S_{1} \neq S_{2}$ of $\Phi_{h}$ and a value $\vv$ in the range of $\rvf$ such that: \begin{enumerate}[leftmargin=15pt, itemsep=0pt]
\item $\mathbb{P}(\rh | \Phi_{h}(\rvx))$ is non-degenerate: $\mathbb{P}(\rh = y' \mid \Phi_{h}(\rvx) = S_{i}) > 0$ for $i \in \{1, 2\}$, for some label $y'$. 
\item $\rvf$ does not separate $S_1$ and $S_2$: $\mathbb{P}(\rvf(\rvx) = \vv,\, \Phi_{h}(\rvx) = S_{i}) > 0$ for $i \in \{1, 2\}$.
\item $S_1$ and $S_2$ have different label information: $\mathbb{P}(\ry = y'' \mid \rvf(\rvx) = \vv, \Phi_{h}(\rvx) = S_{1}) \neq \mathbb{P}(\ry = y'' \mid \rvf(\rvx) = \vv, \Phi_{h}(\rvx) = S_{2})$, for some label $y''$.
\end{enumerate} Then $g^{*}(\hat{\rh}, \rvf(\rvx))$ is not calibrated with respect to $\Phi_h$. \end{theorem}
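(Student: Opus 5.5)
We argue by contradiction using Definition \ref{def:comp_cal}. Suppose $g^{*}$ were calibrated w.r.t.~$\Phi_h$, i.e.\ $\mathbb{P}(\ry \mid g^{*}, \Phi_h(\rvx)) = g^{*}$ almost surely. Injectivity of $g^{*}$ on the support of $(\hat{\rh}, \rvf(\rvx))$ gives $\sigma(g^{*}) = \sigma(\hat{\rh}, \rvf(\rvx))$. Conditioning on $g^{*}$ is therefore the same as conditioning on the pair $(\hat{\rh}, \rvf(\rvx))$. Calibration then becomes the concrete requirement
\begin{equation*}
\mathbb{P}(\ry = y \mid \hat{\rh} = h, \rvf(\rvx) = \vv, \Phi_h(\rvx) = S) \;=\; \mathbb{P}(\ry = y \mid \hat{\rh} = h, \rvf(\rvx) = \vv)
\end{equation*}
for every $(h, \vv, S)$ of positive probability. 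In particular the left side cannot depend on $S$.

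The key step is to evaluate the left side at $h = y'$, $\vv$ as given, and $S \in \{S_1, S_2\}$. This uses the sampling structure. The human's draw $\hat{\rh}$ is generated from $\mathbb{P}(\rh \mid \Phi_h(\rvx))$ and is therefore conditionally independent of $(\ry, \rvf(\rvx))$ given $\Phi_h(\rvx)$ (and $\rvx$). Hence, for $S = S_i$,
\begin{equation*}
\mathbb{P}(\ry = y, \hat{\rh} = y' \mid \rvf(\rvx) = \vv, \Phi_h = S_i) = \mathbb{P}(\rh = y' \mid \Phi_h = S_i)\,\mathbb{P}(\ry = y \mid \rvf(\rvx) = \vv, \Phi_h = S_i).
\end{equation*}
Dividing by $\mathbb{P}(\hat{\rh} = y' \mid \rvf(\rvx) = \vv, \Phi_h = S_i) = \mathbb{P}(\rh = y' \mid \Phi_h = S_i)$ cancels the human factor. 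Condition 1 makes this factor positive, and condition 2 makes the events $\{\hat{\rh} = y', \rvf(\rvx) = \vv, \Phi_h = S_i\}$ have positive probability, so every conditional probability here is well defined. We are left with
\begin{equation*}
\mathbb{P}(\ry = y \mid \hat{\rh} = y', \rvf(\rvx) = \vv, \Phi_h = S_i) = \mathbb{P}(\ry = y \mid \rvf(\rvx) = \vv, \Phi_h = S_i).
\end{equation*}

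Taking $y = y''$, condition 3 says these two values differ for $i = 1, 2$. They cannot both equal the single number $\mathbb{P}(\ry = y'' \mid \hat{\rh} = y', \rvf(\rvx) = \vv) = g^{*}_{y''}(y', \vv)$. So calibration fails on a set of positive probability, which gives the contradiction.

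The main obstacle is stating the conditional independence $\hat{\rh} \perp (\ry, \rvf(\rvx)) \mid \Phi_h(\rvx)$ correctly. In the shared-feature setting it follows because $\hat{\rh}$ is drawn using only $\Phi_h(\rvx)$ with fresh randomness. In the $\rvz$ settings I would state it as the modeling assumption implicit in Assumption \ref{fund_assump}, namely that the human's sample depends on the data only through its cell. A smaller care point is that condition 1 should be read with the same $y'$ for both cells, as the statement indicates.
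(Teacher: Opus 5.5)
Your proposal is correct and is essentially the paper's own proof. Both use $\hat{\rh} \perp (\ry, \rvf(\rvx)) \mid \Phi_h(\rvx)$ together with conditions (1)--(2) to obtain positive-probability events on which $\mathbb{P}(\ry \mid \hat{\rh} = y', \rvf(\rvx) = \vv, \Phi_h = S_i) = \mathbb{P}(\ry \mid \rvf(\rvx) = \vv, \Phi_h = S_i)$, invoke condition (3) to show these differ across $i$, and use injectivity to identify $\sigma(g^{*})$ with $\sigma(\hat{\rh}, \rvf(\rvx))$, so that this is exactly a failure of Definition~\ref{def:comp_cal}; your contradiction framing and your explicit justification of the conditional independence via fresh sampling randomness are only cosmetic differences.
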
  See Appendix \ref{sec:thm:negative} for the proof.  Theorem~\ref{thm:negative} reduces the question of when $g^{*}$ inherits calibration to: \textit{when do $\hat{\rh}$ and $\rvf(\rvx)$ together encode the partition?} Two structurally distinct conditions suffice. \textit{(a) Coarse partitions}: If $\Phi_h$ is so coarse that each grouping maps to a unique value of $\hat{\rh}$ (making the sample a one-hot indicator of $\Phi_h$), then $g$ can be calibrated.  This was prevented by the non-degeneracy supposition.  \textit{(b) Multi-calibrated model}: If $\rvf$ is multi-calibrated w.r.t.~$\sigma(\Phi_{f}, \Phi_{h})$, then $g$ can be calibrated w.r.t.~both.  This was prevented in Theorem \ref{thm:negative} by supposition (2).  

\looseness=-1
While condition (b) is not interesting in the limit of true multi-calibration---since $\rvg$ effectively collapses to $\rvf$---it suggests that $\rvf(\rvx)$ can `leak' information about the human's predictions to $g$.  This result is in the same spirit as \citet{Benz2023Aligned}'s solution of human-aligned calibration: improve the team via multi-calibrating the model to make it `human-aware.'  Condition (b) also can be recast in the formalism of \textit{omnipredictors} \citep{gopalan2022omnipredictors}. An $(\mathcal{L}, \mathcal{C})$-omnipredictor is a predictor whose output, after loss-specific post-processing, is competitive with the best hypothesis in $\mathcal{C}$ for every loss $\ell \in \mathcal{L}$.  Multi-calibration w.r.t.~$\mathcal{C}$ is sufficient for this guarantee \citep{gopalan2022omnipredictors}.  In our setting, $\rvf$ is the candidate omnipredictor and $g$ is the post-processor.  Multi-calibration of $\rvf$ w.r.t.~$\sigma(\Phi_{f}, \Phi_{h})$ is precisely the condition making $\rvf$ an $(\mathcal{L}, \{\Phi_{f}, \Phi_{h}\})$-omnipredictor.  Hence combination can preserve $\Phi_{h}$-calibration only when the model is already an omnipredictor for the human's partition.

\looseness=-1
\paragraph{Independent Features}  Now turning to setting in which the human observes features $\rvz$ and the model observes $\rvx$, notice that Theorem \ref{thm:negative} immediately still applies.  Moreover, it can be strengthened by removing supposition (2), since it is impossible for $\rvf$ to separate the cells since $\rvf$ never even has access to $\rvz$.  Yet, ultimately, this is a good scenario and poses no additional complications to combination-based teaming.  Having access to additional information $\rvz$---if only by way of samples---still allows for $\rvg$ to demonstrate complementarity when $\mathbb{P}(\ry | \rvx, \rvz)$ is the true data generating process.


\looseness=-1
\paragraph{Simulation Study} We test these intuitions and empirically support our results via a simulation that compares the settings of shared and independent features.  The data generating distribution is set to:
\begin{equation*}
  \mathbb{P}(\ry \!=\! 1 \mid \rx, \rz) \;=\; \text{\texttt{logistic}}\!\left( -2 + \tfrac{3}{5}\,\Phi_{f}(\rx) + \tfrac{3}{5}\,\Phi_{h}(\rz) + \tfrac{1}{6}\, \Phi_{f}(\rx)\,\Phi_{h}(\rz) \right), 
\end{equation*} with the partitions being $\Phi_{f}(\rx), \Phi_{h}(\rz) \in \{0,1,2,3\}$.  In the \emph{shared-feature} setting, $(\rz = x_{1}, \rx = x_2 )$ such that $x_1 = x_2$ with probability $0.55$ and is drawn uniformly at random otherwise.  The partitions then evaluate to $\Phi_{h}(\rz) = x_1$ and $\Phi_{f}(\rx) = x_2$. In the \emph{independent} setting, $\rx$ and $\rz$ are independent and drawn uniformly on $\{0,1,2,3\}$, with $\Phi_{f}(\rx) = \rx$ and $\Phi_{h}(\rz) = \rz$. In both settings, the upstream predictors $\mathbb{P}(\rh | \rz)$ and $f(\rx)$ are calibrated against their respective partitions by construction. The downstream model $g^{*}$ is the Bayes-optimal predictor by setting it to the empirical mean of $\ry$ within each $\bigl(\hat{\rh}, f(\rx)\bigr)$ bucket. Figure~\ref{fig:model_comb_sim} reports reliability diagrams of $g^{*}$ within $\Phi_{f}$- and $\Phi_{h}$-partitions; marker area encodes cell count and color indexes the partition cell. The left plot of both subfigures shows that $g^{*}$ inherits $\Phi_{f}$-calibration in both regimes.  However, $g^{*}$ is miscalibrated against $\Phi_{h}$, with an ECE of $0.090$ in the shared-feature setting and an ECE of $0.144$ in the independent-feature setting.
\looseness=-1
\begin{figure}
    \centering
     \begin{subfigure}[b]{.85\textwidth}
         \centering
         \includegraphics[width=\linewidth]{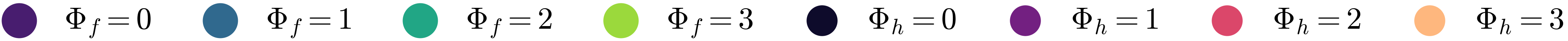}
     \end{subfigure}
\vfill
    \begin{subfigure}[b]{.47\textwidth}
	\centering
        \includegraphics[width=\textwidth]{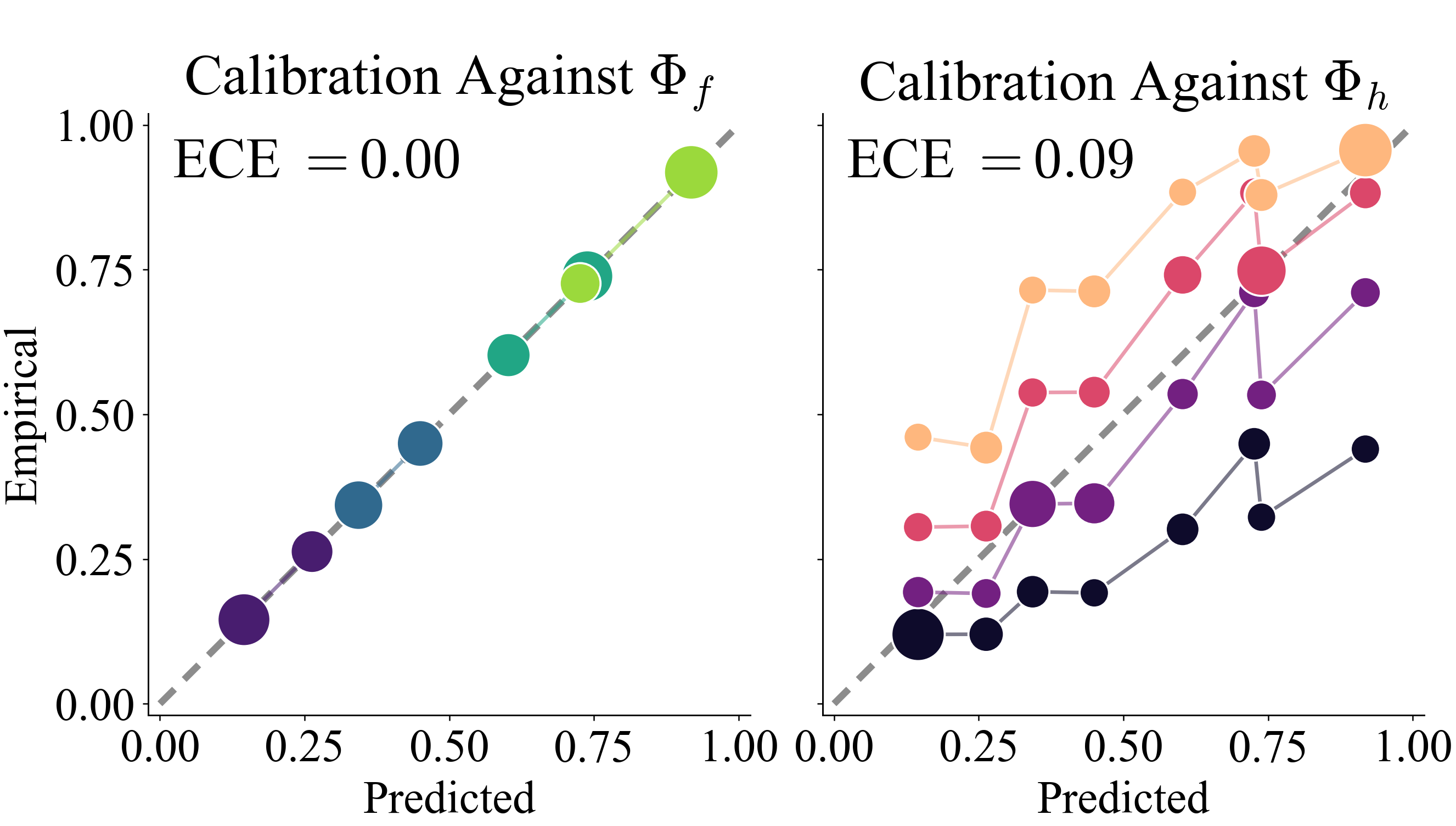} \\
        \subcaption{Shared Features}
        \label{fig:model_comb_shared_features}
    \end{subfigure}
    \hfill
    \begin{subfigure}[b]{.47\textwidth}
        \centering
        \includegraphics[width=\textwidth]{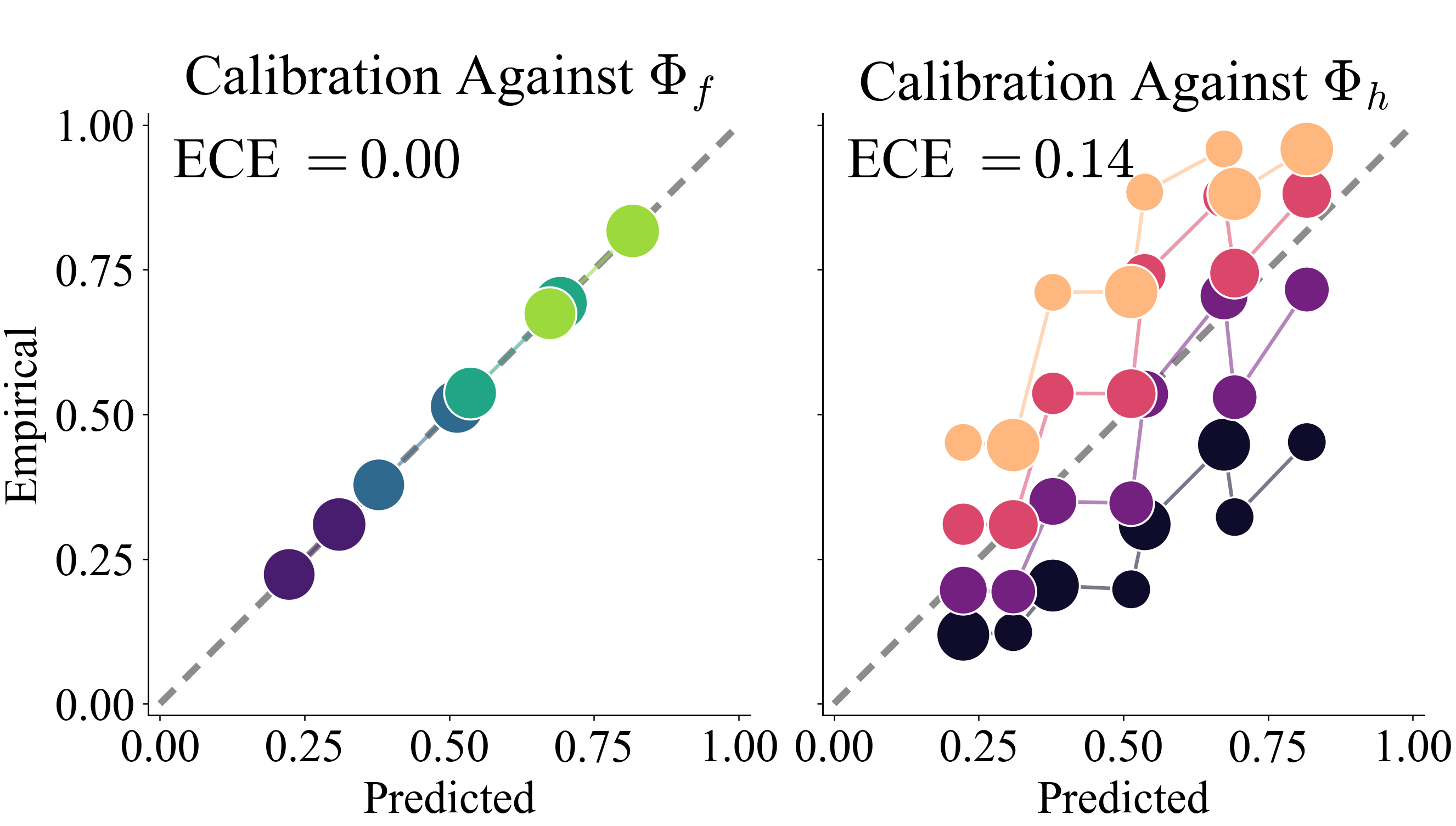} \\
        \subcaption{Independent Features}
        \label{fig:model_comb_indep_features}
    \end{subfigure}
    \caption{\textit{Calibration Failure of Model-Based Combination.}  The figures show simulations evaluating the calibration of $g^{*}$ in the case of a shared feature space (a) and independent feature spaces (b).  In both cases, the combination model $g^{*}$ is calibrated w.r.t.~$\Phi_{f}$ (ECE of 0) but not calibrated w.r.t.~$\Phi_{h}$ (ECE of $.09$ for shared, $.14$ for independent), the partitioning for which the human is calibrated.}  
    \label{fig:model_comb_sim}
\end{figure}

\looseness=-1
\subsection{Bayesian Combination via Confusion Matrix Representation of Human}\label{sec:bayes_comb_results}  We now turn to the Bayesian combination method of \citet{kerrigan2021}.  Unlike the previous sub-section, where the combination was limited to a single sample from the human, this combination sees a (normalized) confusion matrix that aggregates the human's (sampled) predictions. 


\looseness=-1
\paragraph{Shared Feature Space}  We return to the setting of no hidden features: the model and human both make predictions from $\rvx$.  Like in the previous combination setting, we are interested if the combined model is calibrated w.r.t~$\Phi_{f}$ and / or $\Phi_{h}$.  For the former, we again have an affirmative result.
\begin{theorem}\label{thm_cal_bayes_f} \textbf{Bayesian combination posterior is calibrated w.r.t.~$\Phi_{f}$.}  Assuming that $\Phi_{h} \ \bot \ \Phi_{f}(\rvx) \ | \ \ry$, the posterior distribution is calibrated w.r.t.~$\Phi_f$:\begin{equation*}
    p\left(\ry | \rh, f(\rvx) \right) \ = \ \mathbb{P}\left(\ry \ | \ \rh, \Phi_{f}(\rvx) \right) \ = \ \frac{\mathbb{E}_{\Phi_{h}}\left[ \mathbb{P}(\rh | \Phi_{h}(\rvx)) | \ry \right] \cdot \mathbb{P}\left(\ry | \Phi_{f}(\rvx) \right)}{\mathbb{P}(\rh | \Phi_{f}(\rvx))}
\end{equation*}\end{theorem}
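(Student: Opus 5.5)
The plan is to unpack the Kerrigan et al.\ posterior from Section~\ref{subsec:combo} using Assumption~\ref{fund_assump}. This gives the displayed formula directly. Calibration w.r.t.~$\Phi_f$ then follows from the tower-property argument already used in Proposition~\ref{prop:cal_for_f}.

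The first step is to identify the two ingredients of the Bayes combination with population quantities. By Assumption~\ref{fund_assump}, $f(\rvx) = \mathbb{P}(\ry \mid \Phi_f(\rvx))$ and $f$ is $\Phi_f$-measurable, so $p(\ry \mid f(\rvx)) = \mathbb{P}(\ry \mid \Phi_f(\rvx))$. We may take $\sigma(f) = \sigma(\Phi_f)$ after refining $\Phi_f$ to $f$'s level sets, which is harmless because conditioning on $f$ versus $\Phi_f$ yields the same calibrated probabilities. In the population limit, the confusion matrix entry $c_{y,h}$ equals $\mathbb{P}(\rh = h \mid \ry = y)$. Since the human samples $\rh$ from $\mathbb{P}(\rh \mid \Phi_h(\rvx))$, so that $\rh \perp \ry \mid \Phi_h$, marginalizing over $\Phi_h$ gives
\[
c_{y,h} = \sum_{u} \mathbb{P}(\rh = h \mid \Phi_h = u)\, \mathbb{P}(\Phi_h = u \mid \ry = y) = \mathbb{E}_{\Phi_h}\!\left[\mathbb{P}(\rh \mid \Phi_h(\rvx)) \mid \ry = y\right].
\]

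The second step, which is the crux, justifies the independence $\rh \perp f(\rvx) \mid \ry$ that the Kerrigan formula needs, so that $p(\rh \mid \ry, f(\rvx)) = p(\rh \mid \ry)$. I would write
\[
\mathbb{P}(\rh \mid \ry, \Phi_f) = \sum_u \mathbb{P}(\rh \mid \Phi_h = u, \ry, \Phi_f)\,\mathbb{P}(\Phi_h = u \mid \ry, \Phi_f).
\]
The second factor reduces to $\mathbb{P}(\Phi_h = u \mid \ry)$ by the hypothesis $\Phi_h \perp \Phi_f \mid \ry$.

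For the first factor, I need $\rh$ to depend only on $\Phi_h$ once $\Phi_h$ is given. The human's sampling noise is independent of everything else, and $\mathbb{P}(\rh \mid \cdot)$ is $\Phi_h$-measurable, so $\rh \perp (\ry, \Phi_f) \mid \Phi_h$. I expect this to be the main obstacle. It requires reading Assumption~\ref{fund_assump} as ``the human's randomness is exogenous given $\Phi_h$,'' and I would state this explicitly. With it, $\mathbb{P}(\rh \mid \ry, \Phi_f) = \mathbb{P}(\rh \mid \ry)$ as computed in the first step.

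The final step applies Bayes' rule with $\mathcal{S} = \sigma(\rh, \Phi_f)$:
\[
\mathbb{P}(\ry \mid \rh, \Phi_f) = \frac{\mathbb{P}(\rh \mid \ry, \Phi_f)\,\mathbb{P}(\ry \mid \Phi_f)}{\mathbb{P}(\rh \mid \Phi_f)}.
\]
Substituting the two identifications gives exactly the displayed expression, and it coincides with $c_{\ry,\rh} f_{\ry}(\rvx) / \sum_{y'} c_{y',\rh} f_{y'}(\rvx)$. Hence $p(\ry \mid \rh, f(\rvx)) = \mathbb{P}(\ry \mid \mathcal{S})$. Because $\Phi_f$ is $\mathcal{S}$-measurable, the tower property argument of Proposition~\ref{prop:cal_for_f} yields $\mathbb{P}(\ry \mid p, \Phi_f) = p = \mathbb{P}(\ry \mid p)$, i.e.\ calibration in the sense of Definition~\ref{def:comp_cal}.
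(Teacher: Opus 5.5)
Your proposal is correct and follows the paper's proof essentially step for step. Both arguments apply Bayes' rule to $\mathbb{P}(\ry \mid \rh, \Phi_f(\rvx))$, decompose the likelihood over the cells of $\Phi_h$ using $\rh \perp (\ry, \Phi_f(\rvx)) \mid \Phi_h(\rvx)$ and then $\Phi_h \perp \Phi_f(\rvx) \mid \ry$, and substitute $f_y(\rvx) = \mathbb{P}(\ry = y \mid \Phi_f(\rvx))$; your closing tower-property check against Definition~\ref{def:comp_cal} makes explicit what the paper leaves implicit. One small correction: since $\rvf$ is $\Phi_f$-measurable, its level sets coarsen rather than refine $\Phi_f$, but the aside is unneeded anyway because $p(\ry \mid \rvf(\rvx)) = \rvf(\rvx) = \mathbb{P}(\ry \mid \Phi_f(\rvx))$ holds directly and your final step works with $\sigma(\rh, \Phi_f(\rvx))$.
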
  See Appendix \ref{thm_cal_bayes_f_proof}  for the proof.  The result is made straightforward by the conditional independence assumption since it implies $\rh \ \bot \ \Phi_{f}(\rvx) \ | \ \ry$, which then allows for the confusion matrix representation of $p(\rh | \ry)$.  This assumption is crucial, and we will revisit it below.  Turning to the human, again we have a negative result for $\Phi_h$. \begin{proposition}\textbf{Bayesian combination posterior is not calibrated w.r.t.~$\Phi_{h}$.}\label{thm:neg_bayes_h} Assume the confusion-matrix posterior $p(\ry \mid \rh, \rvf(\rvx))$ is injective on the support of $(\rh, \rvf(\rvx))$, and that $\Phi_{h}$ and $\rvf$ satisfy conditions (1)--(3) of Theorem \ref{thm:negative}.  Then the posterior is not calibrated with respect to $\Phi_{h}$. \end{proposition}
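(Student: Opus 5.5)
The plan is to reduce Proposition \ref{thm:neg_bayes_h} to the same argument as Theorem \ref{thm:negative}. That proof never used the Bayes-optimal form of $g^{*}$. It used only two facts: the combiner is an injective function of the pair $(\hat{\rh}, \rvf(\rvx))$, and conditions (1)--(3) hold. So write $q(\rh, \rvf(\rvx)) \triangleq c_{\ry,\rh} f_{\ry}(\rvx) / \sum_{y'} c_{y',\rh} f_{y'}(\rvx)$ for the confusion-matrix posterior, viewed as a deterministic map of $(\hat{\rh}, \rvf(\rvx))$. By assumption $q$ is injective on the support, so $\sigma(q) = \sigma(\hat{\rh}, \rvf(\rvx))$. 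Conditioning on $q$ is therefore the same as conditioning on the pair.

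Suppose, for contradiction, that $q$ is calibrated w.r.t.~$\Phi_h$ in the sense of Definition \ref{def:comp_cal}, i.e.~$\mathbb{P}(\ry \mid q, \Phi_h(\rvx)) = \mathbb{P}(\ry \mid q)$. By injectivity this reads $\mathbb{P}(\ry \mid \hat{\rh}, \rvf(\rvx), \Phi_h(\rvx)) = \mathbb{P}(\ry \mid \hat{\rh}, \rvf(\rvx))$ almost surely. Take the label $y'$ from (1) and the value $\vv$ from (2), and fix $\hat{\rh} = y'$, $\rvf(\rvx) = \vv$. The events $\{\hat{\rh}=y', \rvf=\vv, \Phi_h = S_i\}$ have positive probability for $i=1,2$. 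This needs (2) and also non-degeneracy on the intersection with $\{\rvf=\vv\}$, since $\hat{\rh}$ is sampled from $\mathbb{P}(\rh \mid \Phi_h)$, which is constant on $S_i$. Hence the conditional equality holds pointwise on both events, and
\begin{equation*}
\mathbb{P}(\ry \mid \hat{\rh}=y', \rvf=\vv, \Phi_h=S_1) \ = \ \mathbb{P}(\ry \mid \hat{\rh}=y', \rvf=\vv, \Phi_h=S_2).
\end{equation*}

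Next, remove the conditioning on $\hat{\rh}$. Because the human draws $\hat{\rh}$ from $\mathbb{P}(\rh \mid \Phi_h(\rvx))$ using only the cell, $\hat{\rh} \perp (\ry, \rvf(\rvx)) \mid \Phi_h(\rvx)$. Within a single cell $S_i$, conditioning additionally on $\hat{\rh}=y'$ therefore does not change the law of $\ry$ given $\rvf=\vv$. The display then becomes $\mathbb{P}(\ry \mid \rvf=\vv, \Phi_h=S_1) = \mathbb{P}(\ry \mid \rvf=\vv, \Phi_h=S_2)$, which contradicts condition (3) at label $y''$. So $q$ is not calibrated w.r.t.~$\Phi_h$. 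I would also remark that the conditional-independence assumption used in Theorem \ref{thm_cal_bayes_f} plays no role here, since only injectivity and the sampling structure are used.

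The main obstacle is the conditional-independence step $\hat{\rh} \perp \ry \mid \Phi_h, \rvf$. In the shared-feature setting this requires that the human's sampling noise depend on $\rvx$ only through $\Phi_h(\rvx)$, which is how Assumption \ref{fund_assump} models the human. If the human's internal model were finer, I would instead condition on the pair $(\Phi_h, \rvf)$ and argue directly, as in Appendix \ref{sec:thm:negative}, that the $\hat{\rh}$-conditioned probabilities must still differ across the two cells. A second, minor point to check is that positivity of $\mathbb{P}(\hat{\rh}=y', \rvf=\vv, \Phi_h=S_i)$ follows from (1) and (2) together; it does, because $\mathbb{P}(\hat{\rh}=y' \mid \rvf=\vv, \Phi_h=S_i) = \mathbb{P}(\rh=y' \mid S_i) > 0$.
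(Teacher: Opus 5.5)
Your proposal is correct and follows the paper's proof. The paper also uses injectivity to identify $\sigma(p)$ with $\sigma(\rh, \rvf(\rvx))$ and then invokes the argument of Theorem~\ref{thm:negative} unchanged; you write out the steps it leaves to that reference: positivity of the two atoms from (1)--(2), the conditional independence $\hat{\rh} \perp \ry \mid (\rvf(\rvx), \Phi_h(\rvx))$ from the sampling structure, and the resulting contradiction with (3).
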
 \begin{proof}
Like $g^{*}$, the confusion-matrix posterior is a measurable function of $(\rh, \rvf(\rvx))$.  Yet the human enters only through the confusion matrix $\rvc_{\ry,\cdot} = \int \mathbb{P}(\rh \mid \Phi_{h}(\rvx))\, d\mathbb{P}(\rvx \mid \ry)$, which averages out the within-class variation of $\Phi_{h}$.  Under the injectivity assumption $\sigma(p) = \sigma(\rh, \rvf(\rvx))$, and therefore the proof of Theorem \ref{thm:negative} is applicable here as well: conditions (1)--(3) make $\Phi_{h}$ carry label information beyond $(\rh, \rvf(\rvx))$, which is exactly the failure of generalized calibration (Definition \ref{def:comp_cal}).
\end{proof} From these results, we see that the best case scenario is when $\Phi_{f}$ and $\Phi_{h}$ are conditionally independent (given $\ry$), thereby encoding unique aspects of the feature space.  However, the model becomes quite sensitive to this property, to the point of being non-robust.  The \emph{finer grained} $\Phi_{f}$ becomes, the more likely the conditional independence assumption is to be broken, and the posterior is likely to over-sharpen via double-counting.  The following results shows that the posterior will not be ideally calibrated, even when $\rvf(\rvx) = \mathbb{P}(\ry | \rvx)$.
\begin{proposition}\label{prop_post_miscal_f_bayes}\textbf{Posterior miscalibration when model is already Bayes predictor.}  Let $\Phi_{f}(\rvx) = \rvx$.  Suppose that $\Phi_{h}$ partitions $\mathfrak{X}$ into multiple cells, and there exists a $y'$ such that $\mathbb{P}(\ry = y' | \Phi_{h}=S_1) \ne \mathbb{P}(\ry = y' | \Phi_{h}=S_2)$.  Then the posterior is not ideally calibrated: $p\left(\ry \ | \ \rh, \rvf(\rvx) \right) \ne \mathbb{P}(\ry | \rvx)$. \end{proposition}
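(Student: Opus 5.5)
Write $\vf(\vx) = \mathbb{P}(\ry \mid \vx)$, which is $\rvf$ when $\Phi_f(\rvx)=\rvx$. The confusion-matrix posterior is
\[
p(\ry = y \mid \rh = h, \vf(\vx)) \;=\; \frac{c_{y,h}\, \mathbb{P}(\ry=y\mid \vx)}{\sum_{y'} c_{y',h}\, \mathbb{P}(\ry=y'\mid\vx)},
\qquad c_{y,h} = \mathbb{P}(\rh = h \mid \ry = y).
\]
This equals $\mathbb{P}(\ry=y\mid\vx)$ for all $y$ (on the support of $\mathbb{P}(\ry \mid \vx)$) if and only if $c_{y,h}$ does not depend on $y$ there. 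So the plan is a proof by contradiction. Suppose the posterior is ideally calibrated, i.e.\ equals $\mathbb{P}(\ry\mid\vx)$ for every $\vx$ and every realized $h$. We aim to show that the confusion matrix must then have constant columns. That would make $\rh$ independent of $\ry$, which contradicts the hypothesis that $\Phi_h$ has cells with different label distributions.

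\textbf{Step 1.} Fix $\vx$ and $h$ with $\mathbb{P}(\rh=h\mid\vx)>0$. The identity above gives $c_{y,h} = \sum_{y'} c_{y',h}\,\mathbb{P}(\ry=y'\mid\vx)$ for every $y$ with $\mathbb{P}(\ry=y\mid\vx)>0$. The right-hand side is a weighted average of the column, so on the support of $\mathbb{P}(\ry\mid\vx)$ the column $c_{\cdot,h}$ must be constant. Taking the union over $\vx$ makes each column constant across all labels with positive marginal mass. If needed, add the mild assumption that these supports overlap, or simply that $\mathbb{P}(\ry\mid\vx)$ has full support.

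\textbf{Step 2.} Constant columns give $\mathbb{P}(\rh=h\mid\ry=y)=\mathbb{P}(\rh=h)$, so $\rh\perp\ry$. By Assumption~\ref{fund_assump}, $\mathbb{P}(\rh=h\mid\rvx) = \mathbb{P}(\ry=h\mid\Phi_h(\rvx))$. Compute
\[
\mathbb{P}(\rh=y,\ry=y) \;=\; \mathbb{E}\!\left[\mathbb{P}(\ry=y\mid\Phi_h)^2\right],
\]
using that $\rh$ and $\ry$ are conditionally independent given $\rvx$ and that the human's distribution is $\Phi_h$-measurable. Independence would force this to equal $\mathbb{P}(\ry=y)^2 = \mathbb{E}[\mathbb{P}(\ry=y\mid\Phi_h)]^2$. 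Equivalently, $\mathrm{Var}\big(\mathbb{P}(\ry=y\mid\Phi_h)\big)=0$ for every $y$. The hypothesis $\mathbb{P}(\ry=y'\mid S_1)\neq\mathbb{P}(\ry=y'\mid S_2)$, with both cells of positive mass, gives positive variance for $y=y'$. This is the contradiction, so the posterior is not ideally calibrated for some $(\vx,h)$ of positive probability.

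\textbf{Main obstacle.} The delicate part is Step 1's support bookkeeping. Ideal calibration only constrains $(\vx,h)$ pairs of positive probability, and labels with zero conditional mass are unconstrained. I would first try assuming $\mathbb{P}(\ry\mid\vx)$ has full support, which is implicit in the paper's setting. Failing that, I would restrict Step 2's variance argument to $y=y'$ and $h=y'$. The event $\{\rh=y',\ry=y'\}$ has positive probability, since $\mathbb{E}[\mathbb{P}(\ry=y'\mid\Phi_h)^2]>0$. So the constraint from Step 1 applies on that event, and the contradiction can be derived there directly.
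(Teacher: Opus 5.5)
Your main line follows the paper's route. The paper's proof likewise notes that $f_y(\vx)\,c_{y,\rh}/\sum_j c_{j,\rh} f_j(\vx)$ can equal $f_y(\vx)$ only if $c_{j,\rh}$ is constant in $j$, and then just asserts that this contradicts the heterogeneity of $\Phi_h$. Your Step 2 supplies that missing argument: the identity $\mathbb{P}(\rh = y', \ry = y') = \mathbb{E}[\mathbb{P}(\ry = y' \mid \Phi_h)^2]$, together with $\mathbb{P}(\rh = y') = \mathbb{P}(\ry = y')$, turns independence into $\mathrm{Var}(\mathbb{P}(\ry = y' \mid \Phi_h)) = 0$. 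So with a full-support assumption on $\mathbb{P}(\ry \mid \rvx)$, your proof is complete and more careful than the paper's.

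The gap is the fallback in your ``Main obstacle'' paragraph. It cannot work, because the proposition is false without some non-degeneracy of $\mathbb{P}(\ry \mid \rvx)$.

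\textbf{Counterexample.} Take $\mathfrak{X} = \{1,2\}$ with $\rvx$ uniform, $\ry = \rvx$, and $\Phi_h(\rvx) = \rvx$. Then the human reports $\rh = \rvx$, $\mC$ is the identity, and $\rvf(\rvx)$ is one-hot. At the only positive-probability pairs $(\rh, \rvx) \in \{(1,1), (2,2)\}$ the posterior equals $\mathbb{P}(\ry \mid \rvx)$. Yet $\mathbb{P}(\ry = 1 \mid S_1) = 1 \neq 0 = \mathbb{P}(\ry = 1 \mid S_2)$.

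On the event $\{\rh = y', \ry = y'\}$ your Step 1 constraint collapses to $c_{y',y'} = c_{y',y'}$, so there is nothing to contradict. Zeros of $f_y(\vx)$ mask the $y$-dependence of the column $c_{\cdot,\rh}$. This is not only a deterministic-label pathology: ideal calibration also holds, with non-degenerate labels and a non-degenerate human, whenever distinct cells of $\Phi_h$ place their label mass on disjoint sets of labels.

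The paper's proof tacitly uses the same assumption when it claims the posterior depends on $\rh$ unless $c_{j,\rh}$ is constant in $j$. So the assumption is neither implicit in the setting nor removable; state it as a hypothesis. A sufficient form your argument handles directly is a positive-probability set of $\vx$ on which $\mathbb{P}(\rh = y' \mid \vx) > 0$ and $\mathbb{P}(\ry \mid \vx)$ has full support. Step 1 at any such $\vx$ already makes column $y'$ of $\mC$ constant. That gives $\mathbb{P}(\rh = y', \ry = y') = \mathbb{P}(\rh = y')\,\mathbb{P}(\ry = y')$, which is all Step 2 needs.
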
 
See Appendix \ref{sec:prop_post_miscal_f_bayes} for the proof.  This non-robust behavior presents an interesting contrast to model-based combination, which \emph{can} be ideally calibrated when the model is.

\looseness=-1
\paragraph{Independent Features} In the case of independent features, the human and model are guaranteed to introduce complementary information: $\rvz \perp \rvx \ | \ \ry$ implies $\Phi_{h} \perp \Phi_{f} \ | \ \ry$.  In turn, while the posterior is still not calibrated w.r.t~$\Phi_{h}$, we no longer have to worry about $f(\rvx)$ being \emph{too calibrated} and combination introducing over-sharpening.  This suggests an experimental hypothesis: when the model and human share features, we should see the posterior become an increasingly \emph{worse} predictor as $\rvf$ becomes more calibrated.  However, with independent feature spaces, improvements in $\rvf$'s calibration should always translate to a better posterior predictor.

\begin{figure}[t]
    \centering
    \begin{subfigure}[b]{0.329\textwidth}
        \centering
        \includegraphics[width=\linewidth]{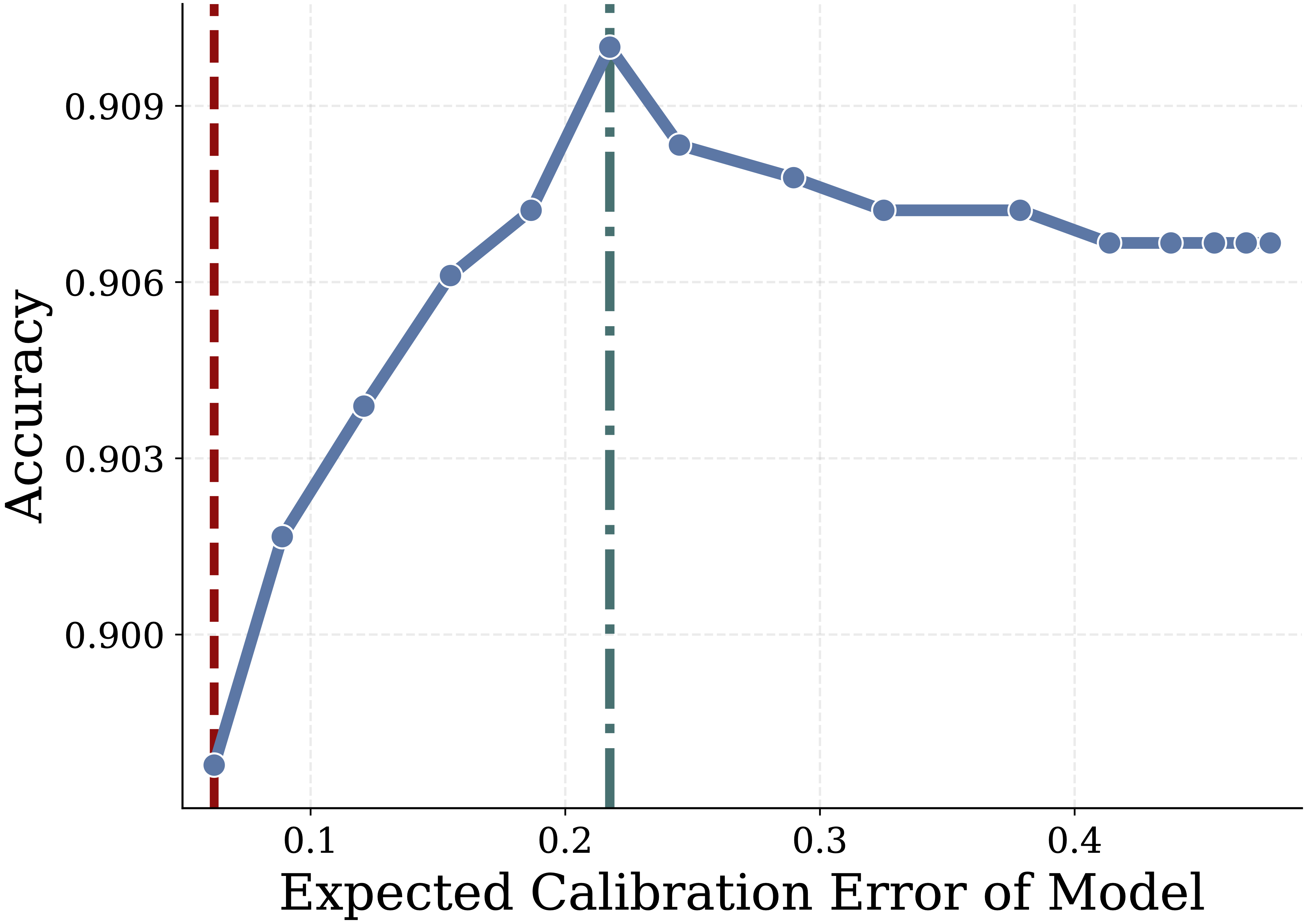}
        \caption{ImageNet-16H: Accuracy}
        \label{fig:imagenet16h_comb_acc}
    \end{subfigure}
    \begin{subfigure}[b]{0.329\textwidth}
        \centering
        \includegraphics[width=\linewidth]{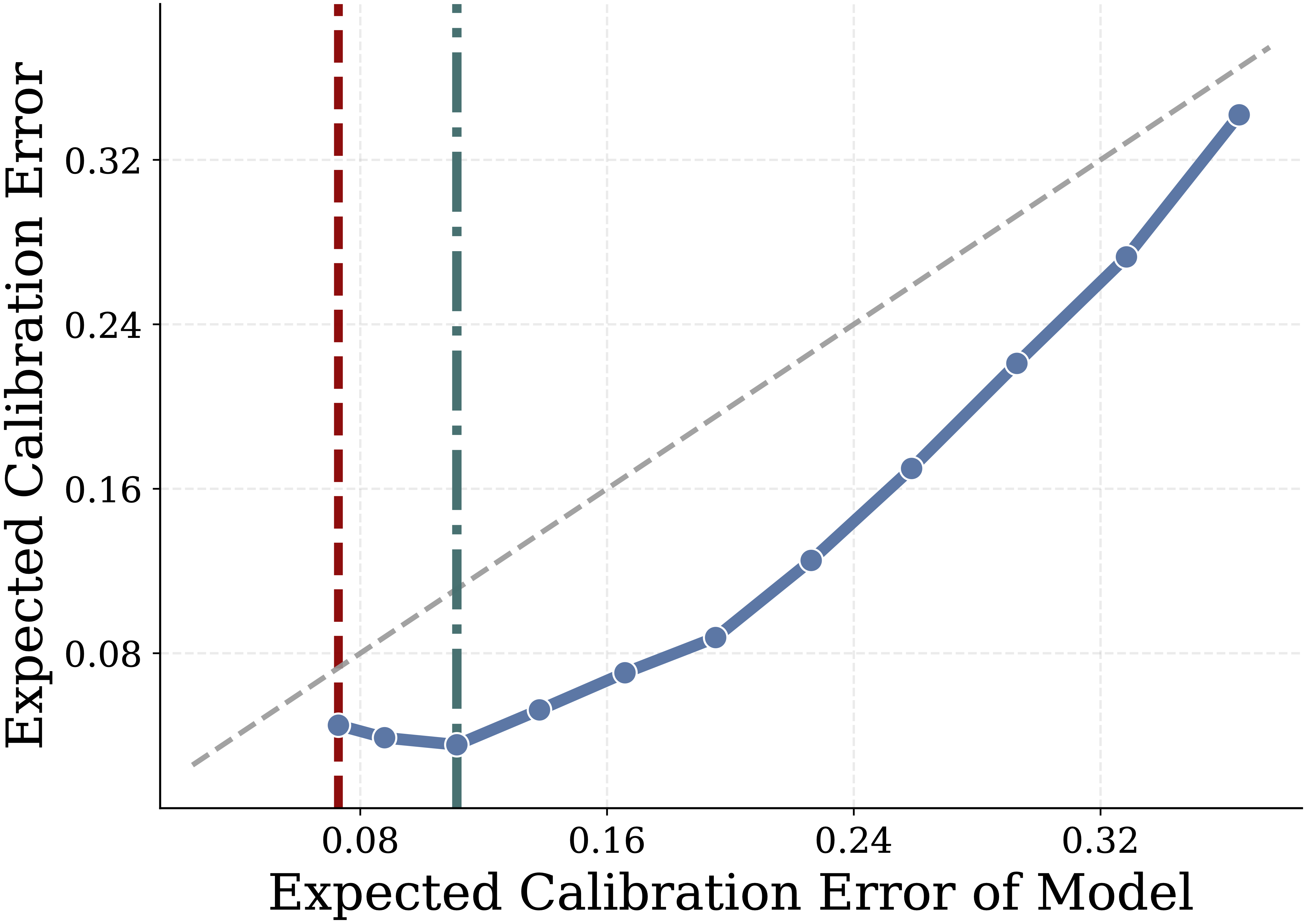}
        \caption{ImageNet-16H: ECE}
        \label{fig:imagenet16h_comb_ece}
    \end{subfigure}
    \begin{subfigure}[b]{0.329\textwidth}
        \centering
        \includegraphics[width=\linewidth]{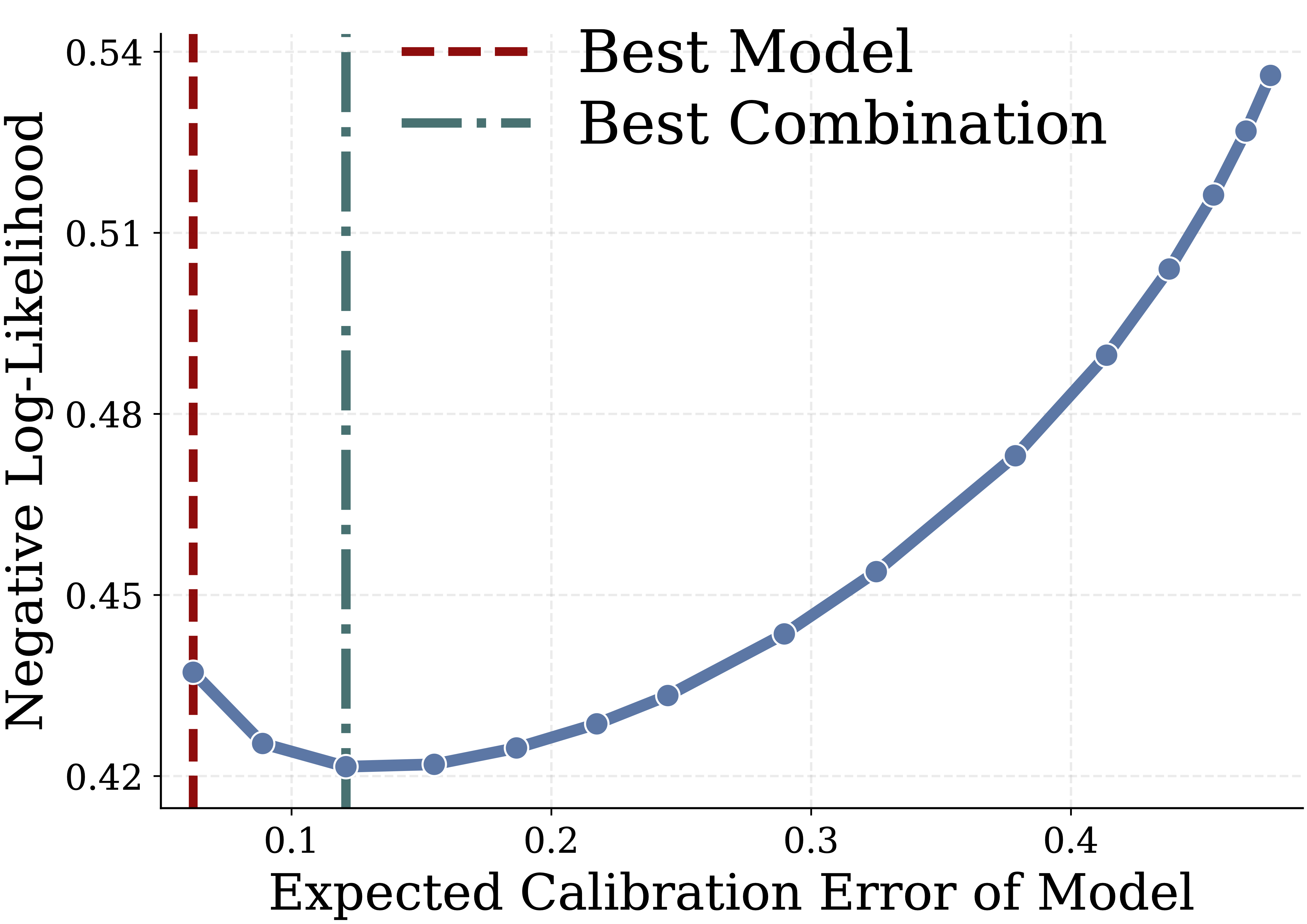}
        \caption{ImageNet-16H: Neg. LL}
        \label{fig:imagenet16h_comb_nll}
    \end{subfigure}
    \vspace{0.8em}
    \begin{subfigure}[b]{0.329\textwidth}
        \centering
        \includegraphics[width=\linewidth]{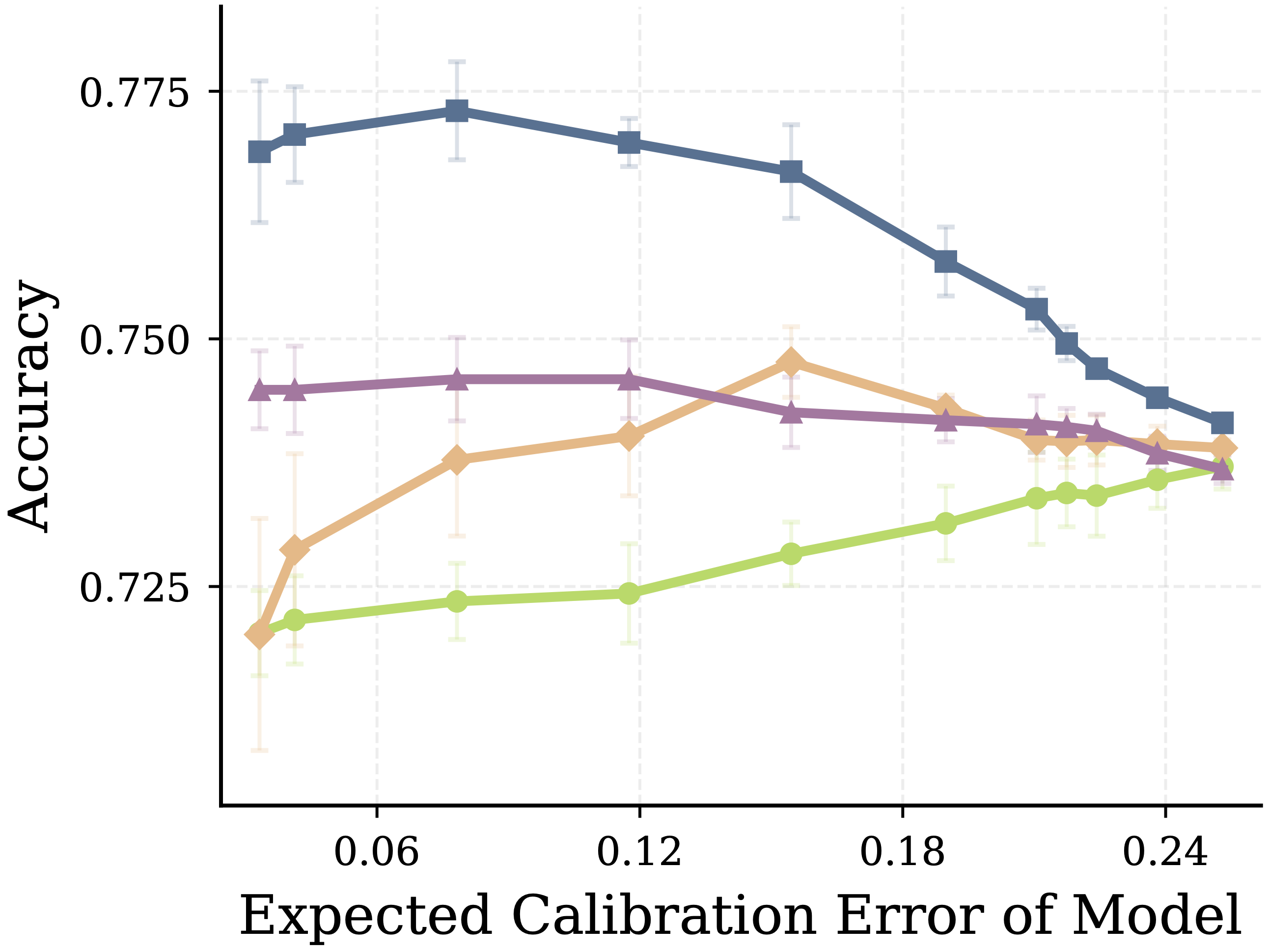}
        \caption{HAM10000: Accuracy}
        \label{fig:ham10k_acc}
    \end{subfigure}
    \begin{subfigure}[b]{0.329\textwidth}
        \centering
        \includegraphics[width=\linewidth]{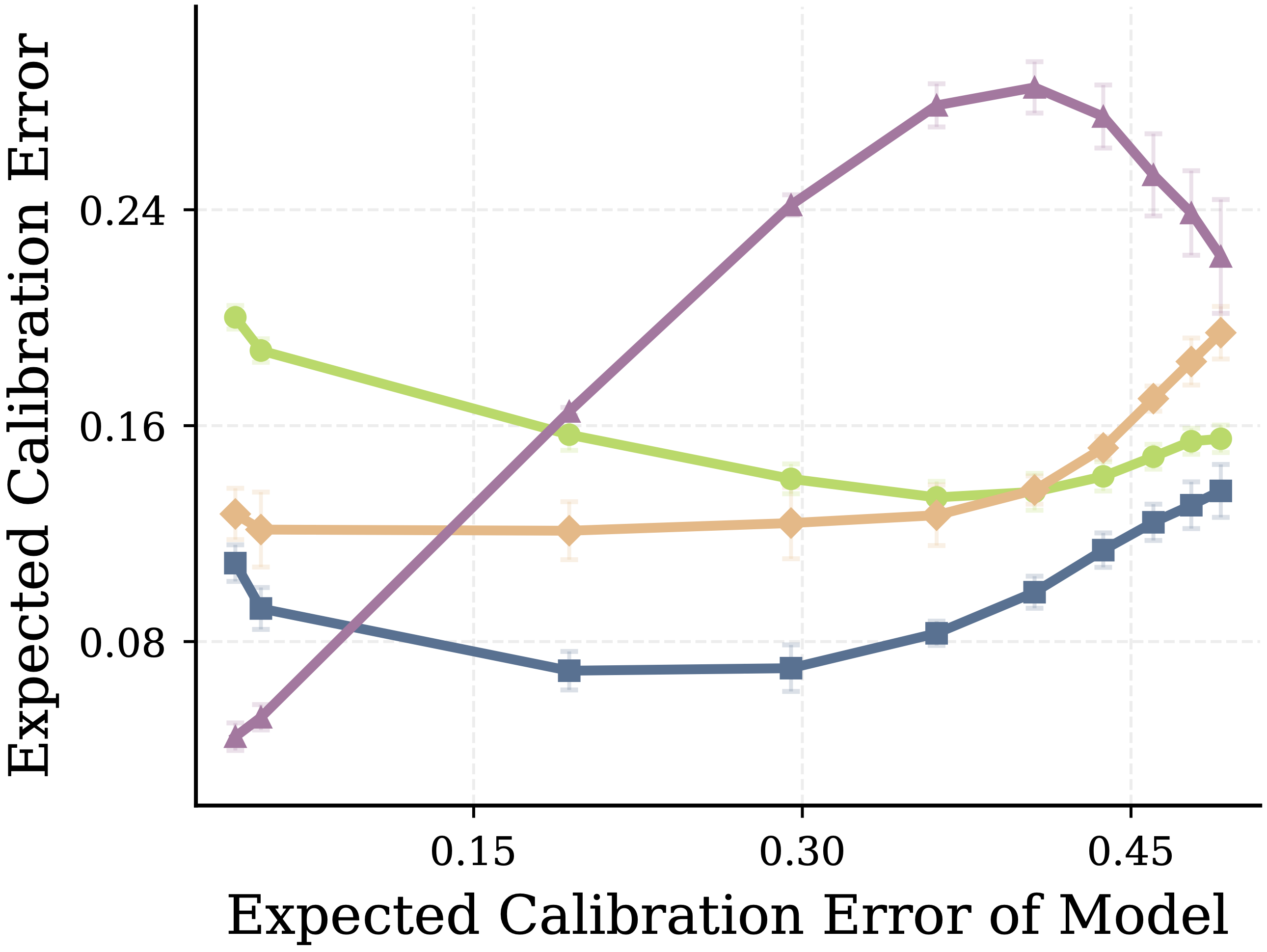}
        \caption{HAM10000: ECE}
        \label{fig:ham10k_ece}
    \end{subfigure}
    \begin{subfigure}[b]{0.329\textwidth}
        \centering
        \includegraphics[width=\linewidth]{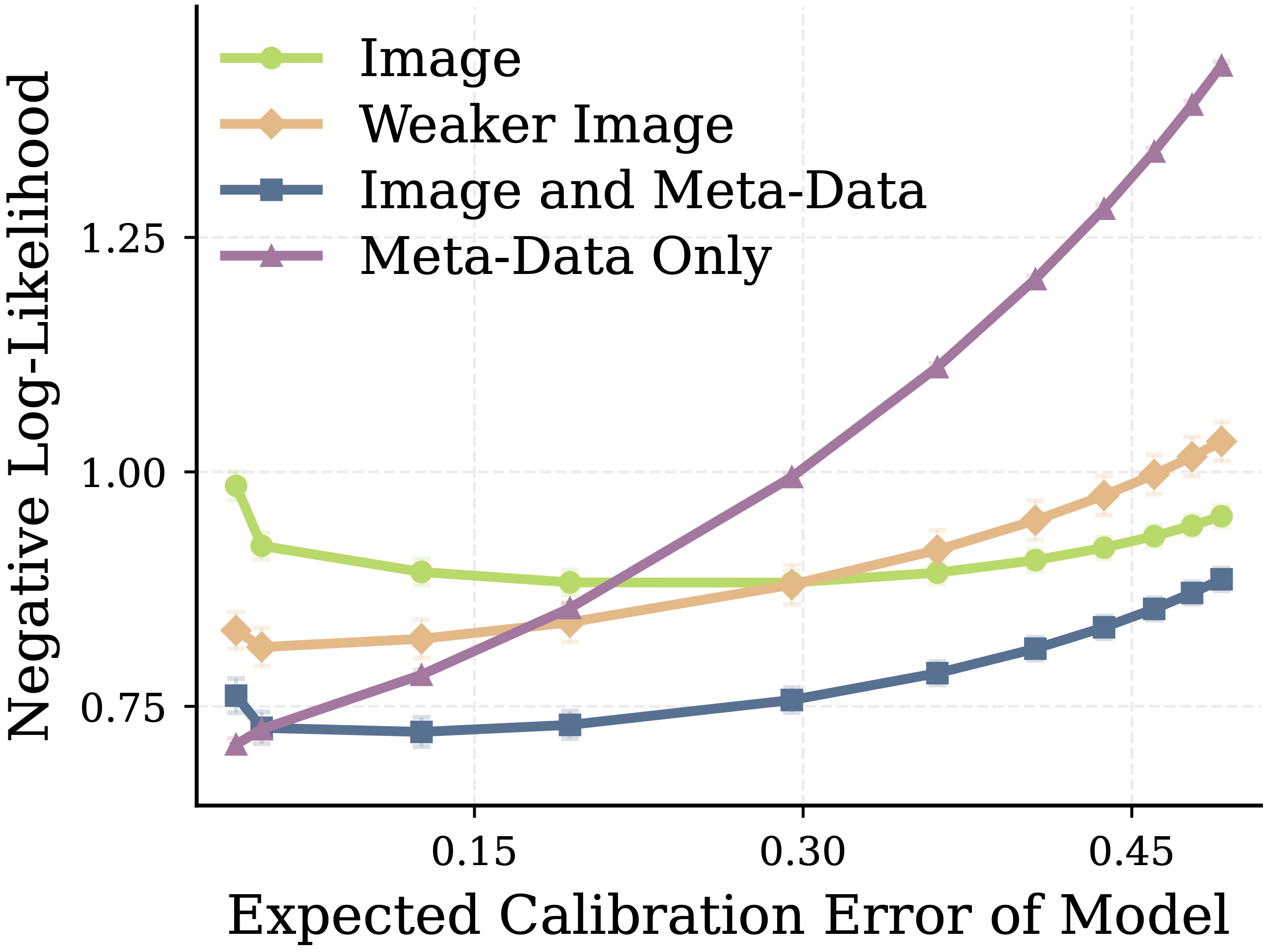}
        \caption{HAM10000: Neg. LL}
        \label{fig:ham10k_nll}
    \end{subfigure}
    \caption{\textit{Empirical Validation for Real and Simulated Humans.}
    \textit{Top row}: For ImageNet-16H, we vary the (confidence) calibration of a VGG-19 classifier via temperature scaling and combine each classifier with a sampled human label using the confusion-matrix update.  The best calibrated model never results in the best posterior predictor. \textit{Bottom row}: On HAM10000, we vary the calibration of an image-only ResNet-18 classifier and combine it with sampled predictions from simulated humans with a range of feature spaces.  When `humans' see only meta-data, better model calibration results in an improved posterior predictor, as our theory predicts.}
    \label{fig:empirical_validation}
\end{figure} \looseness=-1
\paragraph{Experimental Validation on ImageNet-16H}  We evaluate the above hypothesis starting with the \textit{shared features} setting.  We use the same experimental setup as \citet{kerrigan2021} for the ImageNet-16H dataset, which contains $28,997$ human classifications of ImageNet-16 images from a total of $145$ participants.\footnote{While the human participants could be using hidden features to inform their predictions, the fact that computer vision systems have surpassed human performance on ImageNet \citep{superhuman2015ICCV} suggests any advantage from hidden information is negligible.}  For each image, the dataset provides approximately six human annotations. We aggregate these annotations and interpret the normalized count vector as a human's predictive distribution for that image. To match our theoretical setting, we sample from this empirical distribution. See Appendix \ref{bayes_comb_exper_details} for additional experimental details.  We evaluate the performance of the resulting human-AI combination as the classifier's individual confidence calibration varies.  We do this via temperature scaling the VGG-19 softmax probabilities \citep{guo2017calibration}.  We sweep over a grid of temperatures and evaluate both the standalone classifier and the confusion-matrix-combined predictor using held-out samples.  We report three quantities: accuracy, expected (confidence) calibration error (ECE), and negative log-likelihood (NLL).  Results are reported in Subfigures \ref{fig:imagenet16h_comb_acc}, \ref{fig:imagenet16h_comb_ece}, and \ref{fig:imagenet16h_comb_nll}.  All experiments agree with our theory: for a shared feature space, better model calibration does not lead to a better combined predictor. The best posterior accuracy was achieved at an ECE of $0.22$; the best posterior ECE was achieved at a model ECE of $0.1$; and  the best posterior NLL was achieved at a model ECE of $0.12$.  In all three cases, there were at least three better calibrated models that resulted in worse predictive posteriors.  We see similar trends for CIFAR-10H in Appendix Figure \ref{fig:cifar10h_appendix}. 

\looseness=-1
\paragraph{Experimental Validation on HAM10000} We next use the HAM10000 \citep{tschandl2018ham10000} to analyze the \textit{independent feature} setting.  In HAM10000, each example contains a $224 \times 224$ dermoscopic image along with patient- or lesion-level meta-data. We identify the image with the model-observed feature $\rvx$ and treat the meta-data as $\rvz$---unavailable to the base classifier but available to the human.  The model is a temperature-scaled ResNet-18.  We simulate the human using an auxiliary classifier, allowing for a controlled comparison of several feature-access settings. The first, denoted \emph{Image}, represents the human via the same ResNet-18 employed for the main classifier (i.e.~perfectly redundant features). The second, denoted \emph{Weaker Image}, uses a weaker ResNet-18 trained on $128 \times 128$ images for 4 epochs, which represents a human with a coarser feature space. The third, denoted \emph{Image and Meta-Data}, is a multimodal expert that uses a ResNet-18  to encode the image before combining it with meta-data via an MLP.  This represents a human with a strictly richer feature space. The fourth, denoted \emph{Meta-Data Only}, is a logistic-regression-based human trained on age, sex, and lesion localization.  It has an independent feature space, unlike the previous three (simulated) humans.  See Appendix \ref{bayes_comb_exper_details} for additional details.  Subfigures \ref{fig:ham10k_acc}, \ref{fig:ham10k_ece}, and \ref{fig:ham10k_nll} report accuracy, ECE under confidence calibration, and NLL, respectively.  Just as for the ImageNet-16H results, the y-axis evaluates the predictive quality of the posterior as a function of $\rvf(\rvx)$'s confidence calibration by varying the temperature parameter.  For the meta-data-only human (purple), we see that---as our theory predicts---the posterior achieves its best accuracy, ECE, and NLL for the \emph{best-calibrated} classifier.  For the image-only human, the posterior accuracy is best for the \emph{worst calibrated model}, and the posterior ECE and NLL is best for the fifth-best calibrated model---again, as theory predicts.  For the image-and-meta-data human (blue), the posterior is best for the third-best calibrated model.  This result also aligns with theory as we should expect behavior that interpolates between the fully-shared (Image-only) and fully-independent (Meta-Data-only) experiments.

\section{Delegation Through the Lens of Calibration}\label{sec:delegation}

\looseness=-1
We now turn to the alternative setting of delegation: either the human or model predicts with the selection being determined by the rejector $r(\rvx)$.  Since the model or human alone makes the prediction, predictions will always come from a predictor that is either $\Phi_f$- or $\Phi_h$-calibrated, by assumption.  Thus the focus of our analysis turns to $r(\rvx)$: \textit{how must the rejector be calibrated in order to choose the superior downstream predictor?}  Assuming the downstream predictors are fixed, we can write the combined (system) risk as:  \begin{equation}\label{eq:cal_01_risk}
    \mathcal{R}^{0\text{-}1}_{f,h}(r) \ \triangleq \
    \mathbb{E}_{\rvx}\!\left[(1 - r(\rvx))\!\left(1 - \max_{y} f_{y}(\rvx)\right)
    + r(\rvx)\!\left(1 - \mathbb{P}(\rh = \ry \mid \rvx)\right)\right].
\end{equation}  Minimization of Equation \ref{eq:cal_01_risk} yields the oracle rule $r^{*}(\rvx) = \mathbb{I}[\mathbb{P}(\rh = \ry \mid \rvx) > \max_{y} f_{y}(\rvx)]$, recovering the Bayes-optimal deferral rule of \citet{pmlr-v119-mozannar20b}.  In practice, we assume the rejector is parameterized by a scalar score $\rrho: \mathfrak{X} \times \Delta^{K-1} \mapsto [0,1]$ such that \begin{equation}\label{eq:l2d_decision_rule}
    r(\rvx; \rrho) \ \triangleq \ \mathbb{I}\!\left[\rrho(\rvx, \rvf(\rvx)) \ > \ \max_{y} f_{y}(\rvx)\right].
\end{equation}  If $\rrho$ is ideally calibrated such that $\rrho(\rvx, \rvf(\rvx)) = \mathbb{P}(\rh = \ry \mid \rvx)$, then the oracle rule $r^{*}$ is recovered.  Notably, $\rrho$ takes both the original features $\rvx$ and the model's output $\rvf(\rvx)$ as inputs.  This explicit dependence on $\rvf(\rvx)$ is a key feature of this parameterization and differs from most L2D implementations \citep{okati2021differentiable, pmlr-v119-mozannar20b, verma2022calibrated, cao2023in}---but not all \citep{madras2018predict}---where the probability that the expert is correct is typically estimated from $\rvx$ alone.  This choice does not incur computational overhead since finding the maximum dimension of $\rvf(\rvx)$ is needed to evaluate $r(\rvx; \rrho)$.  We discuss the theoretical implications of this choice below.

\looseness=-1
We now present our central result for delegation that characterizes the rejector's calibration requirements in relation to the downstream predictors.
\begin{theorem}\textbf{Risk-minimizing rejector under calibration.}\label{thm:cal_rejector}  Assume ties between the downstream predictors are negligible: $\mathbb{P}(\rh = \ry \mid \rvx) \neq \max_{y} f_{y}(\rvx)$ almost surely. Let $\Phi_{r}$ be a grouping function on $\mathfrak{X} \times \Delta^{K-1}$, and let $\tilde{\Phi}_{r}(\rvx) \triangleq \Phi_{r}(\rvx, \rvf(\rvx))$ be the partition of $\mathfrak{X}$ induced by the additional refinement w.r.t.~$\rvf(\rvx)$.  Assume $\rrho$ is calibrated w.r.t.~$\tilde{\Phi}_{r}(\rvx)$.  Then the rejector $r(\rvx; \rrho)$ (Equation \ref{eq:l2d_decision_rule}) minimizes
$\mathcal{R}^{0\text{-}1}_{f,h}(r)$ (Equation \ref{eq:cal_01_risk}) if and only if  $\sigma(\1_{\mathfrak{D}^{*}}) \subseteq \sigma(\tilde{\Phi}_{r})$, where $\mathfrak{D}^{*}  \triangleq  \{ \rvx \in \mathfrak{X} \ | \ r^{*}(\rvx) = 1 \}$, the points for which the human should make the prediction under the Bayes optimal rejector.
\end{theorem}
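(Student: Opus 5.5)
We prove the two directions separately, working throughout with the pointwise structure of the risk in Equation \ref{eq:cal_01_risk}. First, rewrite the risk as
\begin{equation*}
\mathcal{R}^{0\text{-}1}_{f,h}(r) \;=\; \mathbb{E}_{\rvx}\!\left[1 - \max_{y} f_{y}(\rvx)\right] \;-\; \mathbb{E}_{\rvx}\!\left[r(\rvx)\,\Delta(\rvx)\right], \qquad \Delta(\rvx) \triangleq \mathbb{P}(\rh = \ry \mid \rvx) - \max_{y} f_{y}(\rvx).
\end{equation*}
Since $\Delta \neq 0$ almost surely by the no-ties assumption, $r$ minimizes the risk if and only if $r = \1_{\mathfrak{D}^{*}}$ almost surely. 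Here $\mathfrak{D}^{*} = \{\Delta > 0\}$. Any disagreement on a set of positive measure strictly increases the risk, because the integrand $r\Delta$ is then strictly smaller there. This reduces the theorem to a question of measurability: when does $r(\cdot;\rrho) = \1_{\mathfrak{D}^{*}}$ a.s.?

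Next, we unpack the calibration of $\rrho$ w.r.t.\ $\tilde{\Phi}_{r}$. Since $\rrho$ is a function of $(\rvx, \rvf(\rvx))$, we read Definition \ref{def:cal} with the binary target $\mathbb{I}[\rh = \ry]$ as giving
\begin{equation*}
\rrho(\rvx, \rvf(\rvx)) \;=\; \mathbb{E}\!\left[\mathbb{P}(\rh = \ry \mid \rvx) \mid \tilde{\Phi}_{r}(\rvx)\right].
\end{equation*}
In particular $\rrho$ is $\sigma(\tilde{\Phi}_{r})$-measurable. We also note that $\max_y f_y(\rvx)$ is $\sigma(\tilde{\Phi}_{r})$-measurable, because $\tilde{\Phi}_{r}$ refines w.r.t.\ $\rvf(\rvx)$; this is exactly why the parameterization feeds $\rvf(\rvx)$ to $\rrho$. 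Consequently
\begin{equation*}
\rrho - \max_y f_y \;=\; \mathbb{E}\!\left[\Delta \mid \tilde{\Phi}_{r}\right],
\end{equation*}
so $r(\rvx;\rrho) = \mathbb{I}\bigl[\mathbb{E}[\Delta \mid \tilde{\Phi}_{r}] > 0\bigr]$, which is $\sigma(\tilde{\Phi}_{r})$-measurable.

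\emph{Necessity.} Suppose $r(\cdot;\rrho)$ is a minimizer. Then $\1_{\mathfrak{D}^{*}} = r(\cdot;\rrho)$ a.s., and the right-hand side is $\sigma(\tilde{\Phi}_{r})$-measurable. Hence $\sigma(\1_{\mathfrak{D}^{*}}) \subseteq \sigma(\tilde{\Phi}_{r})$ up to null sets, which is the reading we adopt, completed by null sets.

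\emph{Sufficiency.} Suppose $\mathfrak{D}^{*}$ is a union of cells of $\tilde{\Phi}_{r}$. On a cell $u \subseteq \mathfrak{D}^{*}$ we have $\Delta > 0$ a.s., so its conditional average $\mathbb{E}[\Delta \mid \tilde{\Phi}_{r} = u] > 0$ and $r = 1$. On a cell inside the complement, $\Delta < 0$ a.s., so the average is negative and $r = 0$. Thus $r(\cdot;\rrho) = \1_{\mathfrak{D}^{*}}$ a.s., and by the first step $r(\cdot;\rrho)$ minimizes the risk.

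\emph{Main obstacle.} The delicate step is the measure-theoretic bookkeeping. We must treat the conditional expectation on cells that may have probability zero when $\tilde{\Phi}_{r}$ is fine-grained, for example $\Phi_{r}$ equal to the identity. We must also make precise that the strict inequality for the averages follows from $\Delta$ having a constant sign a.s.\ on the cell. We would handle this by working with regular conditional distributions and stating all equalities $\mathbb{P}$-a.s. Strict positivity then holds: the conditional expectation of an a.s.-positive variable is a.s.\ positive, so no boundary ties with $\max_y f_y$ arise except on a null set. A secondary point is justifying that $\max_y f_y$ is $\tilde{\Phi}_{r}$-measurable, and we will state this explicitly as an interpretation of ``refinement w.r.t.\ $\rvf(\rvx)$''. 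Without it, $\mathbb{E}[\Delta\mid\tilde{\Phi}_r]$ would not equal $\rrho - \max_y f_y$, and the sufficiency direction would fail.
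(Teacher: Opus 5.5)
Your proposal is correct and follows essentially the same route as the paper's proof. It reduces risk-minimality to $r(\cdot;\rrho)=\1_{\mathfrak{D}^{*}}$ a.s.\ via the pointwise/excess-risk decomposition under no ties. It then uses calibration plus the refinement w.r.t.\ $\rvf(\rvx)$ to make $r(\cdot;\rrho)$ a $\tilde{\Phi}_{r}$-measurable thresholding of cell averages. Finally it argues necessity by measurability and sufficiency cell by cell. Your explicit ``up to null sets'' reading of $\sigma(\1_{\mathfrak{D}^{*}}) \subseteq \sigma(\tilde{\Phi}_{r})$ in the necessity direction is slightly more careful than the paper, which leaves that qualification implicit.
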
 See Appendix~\ref{proof_cal_rejector} for the proof.  As one would expect, the rejector's induced partition $\tilde{\Phi}_{r}$ must be fine-grained enough to identify the optimal-deferral set $\mathfrak{D}^{*}$.  Thus one pressure on $\tilde{\Phi}_{r}$ is to capture which side of the threshold $\max_{y} f_{y}$ the human's correctness probability falls on.  As the human's partition $\Phi_{h}$ becomes finer-grained, $\tilde{\Phi}_{r}$ must increase its granularity to track its threshold-crossings.  As $\Phi_{h}$'s partitions shrink in size, $\sigma(\1_{\mathfrak{D}^{*}})$ must approach the resolution of $\sigma(\Phi_{f}, \Phi_{h})$ at the boundaries of $\mathfrak{D}^{*}$.  This would force $\tilde{\Phi}_{r}$ to absorb $\Phi_{h}$ and may make the human's involvement unnecessary (as her predictive power becomes increasingly captured by $\rrho$ alone).  

\looseness=-1
\paragraph{$\rvf(\rvx)$ as Input to the Rejector} In Theorem \ref{thm:cal_rejector}, the model's partition $\Phi_{f}$ does not appear explicitly, despite the comparison to $\max_{y} f_{y}(\rvx)$ being $\Phi_{f}$-measurable.  This requirement is circumvented by giving $\rrho$ access to $\rvf(\rvx)$ by assumption, which forces $\tilde{\Phi}_{r}$ to refine $\Phi_{f}$ by construction.  If $\rrho$ had $\rvx$ as its only input, then the sufficient condition would become $\sigma(\Phi_{f}, \1_{\mathfrak{D}^{*}}) \subseteq \sigma(\Phi_{r})$: $\Phi_{r}$ would need to explicitly refine $\Phi_{f}$ in addition to identifying $\mathfrak{D}^{*}$.  As mentioned above, adding $\rvf(\rvx)$ as a feature to the rejector is not common in L2D implementations but actually was done in \citet{madras2018predict}'s seminal work, with the choice being justified through intuition rather than theory.  The majority of subsequent implementations use a neural network to map $\rvx$ to $K+1$ outputs \citep{pmlr-v119-mozannar20b, verma2022calibrated, cao2023in}, with the classifier encoded by the first $K$ dimensions and the rejector encoded by the $(K+1)$th.  When the classifier and rejector share a backbone architecture, we expect adding $\rvf(\rvx)$ as a feature to matter less for these implementations than ones with independent parameterizations \citep{madras2018predict, okati2021differentiable}.


\looseness=-1
\paragraph{Human with Additional Information}  We next consider the case in which the human uses $\rvx$ and additional information $\rvz$ to make her prediction (calibrated w.r.t.~$\Phi_{h}(\rvx, \rvz)$).  This is the setting assumed by L2D's theoretical foundation, since $\rvz$ is necessary for the human to outperform the Bayes classifier on $\mathfrak{X}$.  Yet, like the model, the rejector only has access to $\rvx$, and it must decide which downstream predictor to use without knowing $\rvz$ or $\rh$.  The risk in Equation \ref{eq:cal_01_risk} then becomes
\begin{equation}\label{eq:cal_01_risk_z}
    \widetilde{\mathcal{R}}^{0\text{-}1}_{f,h}(r) \ = \ \mathbb{E}_{\rvx, \rvz}\!\left[(1 - r(\rvx))\!\left(1 - \max_{y} f_{y}(\rvx)\right) + r(\rvx)\!\left(1 - \mathbb{P}(\rh = \ry \mid \rvx, \rvz)\right)\right].
\end{equation} Without access to the additional information, Equation \ref{eq:cal_01_risk_z} can be minimized only up to marginalization over $\rvz$: $\mathbb{P}(\rh = \ry \mid \rvx) = \mathbb{E}_{\rvz \mid \rvx}[\mathbb{P}(\rh = \ry \mid \rvx, \rvz)]$.  Theorem \ref{thm:cal_rejector} then immediately yields:
\begin{corollary}\textbf{Best $\mathfrak{X}$-measurable rejector under additional information.}\label{prop:rejector_z} Let $\tilde{\mathfrak{D}} \triangleq \left\{\rvx \in \mathfrak{X} \mid \mathbb{P}(\rh = \ry \mid \rvx) > \max_{y} f_{y}(\rvx)\right\}$ denote the $\rvz$-marginalized optimal-deferral set.  Among $\mathfrak{X}$-measurable rejectors, $r(\rvx; \rrho)$ minimizes
$\widetilde{\mathcal{R}}^{0\text{-}1}_{f,h}(r)$ (Equation \ref{eq:cal_01_risk_z}) if and only if $\sigma(\1_{\tilde{\mathfrak{D}}}) \subseteq \sigma(\tilde{\Phi}_{r})$.
\end{corollary}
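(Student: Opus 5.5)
The plan is to reduce the corollary to Theorem~\ref{thm:cal_rejector} by first marginalizing out $\rvz$ inside the risk. Since any admissible rejector $r$ is $\mathfrak{X}$-measurable, $r(\rvx)$ and $1-\max_y f_y(\rvx)$ are functions of $\rvx$ alone. Conditioning on $\rvx$ and applying the tower property to Equation~\ref{eq:cal_01_risk_z} then gives
\begin{equation*}
\widetilde{\mathcal{R}}^{0\text{-}1}_{f,h}(r) \ = \ \mathbb{E}_{\rvx}\!\left[(1-r(\rvx))\!\left(1-\max_y f_y(\rvx)\right) + r(\rvx)\!\left(1-\mathbb{E}_{\rvz\mid\rvx}\!\left[\mathbb{P}(\rh=\ry\mid\rvx,\rvz)\right]\right)\right].
\end{equation*}
The inner expectation equals $\mathbb{P}(\rh=\ry\mid\rvx)$, so $\widetilde{\mathcal{R}}^{0\text{-}1}_{f,h}(r)$ coincides with $\mathcal{R}^{0\text{-}1}_{f,h}(r)$ of Equation~\ref{eq:cal_01_risk}, where the human-correctness term is now the $\rvz$-marginalized probability. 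I would state this identity explicitly, noting that the only requirement is integrability, which is trivial for $[0,1]$-valued quantities.

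Next, I would check that the hypotheses of Theorem~\ref{thm:cal_rejector} hold in this reduced problem. The rejector $r(\rvx;\rrho)$, the score $\rrho(\rvx,\rvf(\rvx))$, and the induced partition $\tilde{\Phi}_r$ are all defined on $\mathfrak{X}$ only, so nothing changes there. The calibration assumption on $\rrho$ w.r.t.~$\tilde{\Phi}_r$ is now read with target $\mathbb{P}(\rh=\ry\mid\rvx)$, the marginalized correctness. The tie-negligibility assumption must be imposed for the marginalized quantity, namely $\mathbb{P}(\rh=\ry\mid\rvx)\neq\max_y f_y(\rvx)$ almost surely. I would either state it as inherited from the theorem's standing assumption or add it explicitly.

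With these checks done, the oracle minimizer among $\mathfrak{X}$-measurable rules is $\mathbb{I}[\mathbb{P}(\rh=\ry\mid\rvx)>\max_y f_y(\rvx)]$, whose deferral set is exactly $\tilde{\mathfrak{D}}$. Applying Theorem~\ref{thm:cal_rejector} with $\mathfrak{D}^*$ replaced by $\tilde{\mathfrak{D}}$ then yields the equivalence $\sigma(\1_{\tilde{\mathfrak{D}}})\subseteq\sigma(\tilde{\Phi}_r)$.

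The main obstacle is that Theorem~\ref{thm:cal_rejector} was proved in a setting where the human's predictive model is a function of $\rvx$. I must make sure its proof uses the human only through the scalar function $\rvx\mapsto\mathbb{P}(\rh=\ry\mid\rvx)$, and never uses, for example, $\Phi_h$-measurability on $\mathfrak{X}$. I expect this to hold, since the risk in Equation~\ref{eq:cal_01_risk} depends on the human only through that function. I would verify it by reading the argument abstractly: it compares the calibrated score $\mathbb{E}[\mathbb{P}(\rh=\ry\mid\rvx)\mid\tilde{\Phi}_r]$ against the $\tilde{\Phi}_r$-measurable threshold $\max_y f_y$, cell by cell. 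That comparison is insensitive to where the conditional probability came from. Finally, I would remark that the difference between $\widetilde{\mathcal{R}}^{0\text{-}1}_{f,h}$ at this optimum and the $\rvz$-aware oracle risk is the irreducible excess risk, although the corollary does not require it.
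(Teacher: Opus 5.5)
Your proposal is correct and takes the same route as the paper. The paper also observes that, for $\mathfrak{X}$-measurable rejectors, Equation~\ref{eq:cal_01_risk_z} collapses via $\mathbb{P}(\rh=\ry\mid\rvx)=\mathbb{E}_{\rvz\mid\rvx}[\mathbb{P}(\rh=\ry\mid\rvx,\rvz)]$ to the risk of Equation~\ref{eq:cal_01_risk}, and then applies Theorem~\ref{thm:cal_rejector} with $\tilde{\mathfrak{D}}$ in place of $\mathfrak{D}^{*}$. Your explicit checks are what justify the paper's ``immediately'': that the no-tie assumption is imposed on the marginalized correctness, and that the theorem's proof uses the human only through $\rvx\mapsto\mathbb{P}(\rh=\ry\mid\rvx)$.
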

\begin{wrapfigure}{r}{0.35\linewidth}
    \centering
    \includegraphics[width=\linewidth]{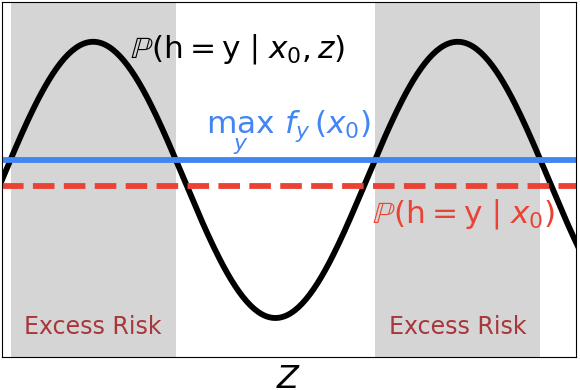}
    \caption{\textit{Threshold crossings due to variation in $\rvz$ drive excess risk.}}
\vspace{-18pt}
\end{wrapfigure}
\looseness=-1 The best $\mathfrak{X}$-measurable rejector cannot in general match the oracle $r^{*}(\rvx, \rvz) = \mathbb{I}[\mathbb{P}(\rh = \ry \mid \rvx, \rvz) > \max_{y} f_{y}(\rvx)]$.  The excess risk $\widetilde{\mathcal{R}}^{0\text{-}1}_{f,h}(r(\rvx; \rrho^{*})) - \widetilde{\mathcal{R}}^{0\text{-}1}_{f,h}(r^{*}(\rvx, \rvz))$ vanishes only when---for each $\rvx = \vx_0$---$\mathbb{P}(\rh = \ry \mid \vx_0, \rvz)$ and $\mathbb{E}_{\rvz \mid \vx_0}[\mathbb{P}(\rh = \ry \mid \vx_0, \rvz)]$ always remain on the same side of the threshold $\max_{y} f_{y}(\vx_0)$.  An example of a setting with excess risk is shown in the figure to the right.  We see that $\mathbb{P}(\rh = \ry \mid \vx_0, \rvz)$ (black line) crosses the threshold  $\max_{y} f_{y}(\vx_0)$ (blue line) several times, yet $\mathbb{E}_{\rvz \mid \vx_0}[\mathbb{P}(\rh = \ry \mid \vx_0, \rvz)]$ (red line) always signals to not defer to the human.  Of course, multi-calibrating the rejector w.r.t.~$\Phi_{h}$ would remove the information asymmetry and the resulting excess risk.  In summary, the presence of additional information is easy to incorporate via human-AI combination and presents no fundamental modeling challenge.  For delegation, the story is not so simple: the deferral policy is fundamentally limited by the unobservable variation in $\rvz$ and its effect on whether the human is the superior downstream predictor.



\looseness=-1
\section{Discussion, Limitations, Future Work}
\looseness=-1  
Assuming a human and model are calibrated predictors, we have analyzed how these calibration properties propagate into the human-AI team's construction.  In particular, we studied two methods for combining human and model predictions and the standard approach to delegating predictions (i.e.~learning to defer).  For the combination-based approaches, calibration w.r.t.~the model's partition $\Phi_f$ is possible (\ref{prop:cal_for_f}, \ref{thm_cal_bayes_f}), but calibration w.r.t.~the human's partition $\Phi_h$ is not (\ref{thm:negative}, \ref{thm:neg_bayes_h}).  This latter result was due to the human's prediction either being sampled or aggregated in a confusion matrix---both of which unavoidably destroy the information in $\Phi_h$.  The possible strategies for regaining $\Phi_h$-calibration are to multi-calibrate the model or draw multiple human predictions \citep{vul2008measuring}, which will recover $\mathbb{P}(\rh | \Phi_{h}(\rvx))$ asymptotically.  The presence of additional features only available to the human poses no issue to the combination frameworks we studied.  Specifically, \citet{kerrigan2021}'s combination method works better when there is information asymmetry, since stronger model calibration is guaranteed to result in a better posterior predictor.  Figure \ref{fig:empirical_validation} reports experimental verification of this result.  

\looseness=-1 
In regards to delegation, Theorem \ref{thm:cal_rejector} shows that the optimal rejector must be calibrated with a granularity determined by the optimal-deferral set---$\{ \rvx \in \mathfrak{X} \ | \ \mathbb{P}(\rh = \ry \mid \rvx) > \max_{y} f_{y}(\rvx) \}$---and $\Phi_{f}$.  This latter requirement suggests that L2D implementations should always include $\rvf(\rvx)$ as an input feature to the rejector model, providing refinement w.r.t.~$\Phi_{f}$ by construction.  Unfortunately, the presence of hidden information makes delegation difficult: the deferral policy can at best make decisions considering only the $\rvz$-marginalized expert correctness, $\mathbb{E}_{\rvz \mid \rvx}[\mathbb{P}(\rh = \ry \mid \rvx, \rvz)]$.  Therefore Human-AI combination is a more natural solution in the presence of considerable feature asymmetry.

\looseness=-1 \paragraph{Multi-Calibration as a Solution?} As mentioned earlier, the prior work of \citet{Benz2023Aligned} showed that assistance benefits when the model is multi-calibrated w.r.t.~the human's confidences.  Multi-calibration also solves many of the theoretical obstacles to well-specified teaming that we have uncovered.  However, we do not think of it as a general solution: for combination, multi-calibrating w.r.t.~$\Phi_h$ collapses the team's predictive structure into $\rvf$, diminishing the human's role and the very motivation for forming a team.  Similarly, for delegation, multi-calibrating the rejector would provide the much needed access to the hidden information that allows for better deferral decisions. However, if the rejector has access to the additional information, why doesn't the model also have access?  And if the model does have access to the (previously) hidden features, the motivation for the human's inclusion in the team is now compromised.  Hence real-world solutions will likely often require collecting the hidden features gradually, as needed.  For example, in a delegation setting, if the rejector is performing poorly due to the degree of hidden information, collecting $\rvz$ not everywhere but just at the boundary of the deferral regions could considerably improve the team's performance.


\looseness=-1
\paragraph{Limitations} The primary limitation of this work is that the model and human are assumed to be calibrated, when, in practice, calibration can be hard to verify for even modestly fine-grained partitions (e.g.~confidence calibration) \citep{NEURIPS2019_f8c0c968, pmlr-v291-rossellini25a}.  Yet, since our results mostly highlight limitations in teaming frameworks, assuming the good-case scenario of calibration increases their generality. Furthermore, learning has been avoided entirely, and as the considerable work on learning theory for delegation has shown, issues surrounding parameterization, consistency and realizability matter in practice \citep{charusaie2022sample, mozannar2023exact, maomastering, mohri2024learning, mao2024realizable, okati2021differentiable}.  Lastly, while experiments on ImageNet-16H and HAM10000 supported our results in Section \ref{sec:bayes_comb_results}, our other results were empirically validated only via small-scale simulations. 

\looseness=-1
\paragraph{Future Work}  It would be natural to extend our results to teams with multiple humans and AI(s) \citep{keswani2021towards, verma2022calibrated, mao2023twostage, tailor2024learning, hemmer2022forming}.  We suspect that additional humans makes combination straightforwardly more powerful (at the cost of collecting human queries) but increases the difficulty of delegation.  Adapting our results to support approximate calibration \cite{foster1998asymptotic,NEURIPS2021_bbc92a64,10.1145/3564246.3585182,gopalan2025efficient} might allow for analysis of combination methods that take as input a human's miscalibrated confidences.  Lastly, studying human-AI teaming over time (possibly supported by a priority queue \citep{lykouris2024learning}), such that the human and model can interact and adapt \citep{Bansal2019Updates, noti2025ai, benz2026learning}, introduces complexity that is outside the scope of static calibration assumptions and is highly relevant to modern human-AI teaming systems.

\looseness=-1
\section*{Acknowledgments} We acknowledge the use of computational resources from the Johns Hopkins DSAI cluster.  Yixin Wang was supported in part by funding from the Office of Naval Research under grant N00014-23-1-2590, the National Science Foundation under grants No. 2310831, No. 2428059, No. 2435696, No. 2440954, the Michigan Institute for Data Science's Propelling Original Data Science (PODS) grant, Two Sigma Investments LP, and LG Management Development Institute AI Research.

\bibliographystyle{plainnat}
\bibliography{references}

\appendix

\section{Proof of Theorem \ref{thm:negative}} \label{sec:thm:negative}
\begin{proof}
The human's sample satisfies $\hat{\rh} \ \bot \ (\ry, \rvf(\rvx)) \mid \Phi_{h}(\rvx)$.  By this independence together with conditions (1) and (2), the shared evidence value $(\hat{\rh}, \rvf(\rvx)) = (y', \vv)$ has positive probability under both cells:
\begin{equation*}
    \mathbb{P}\!\left(\hat{\rh} = y', \rvf(\rvx) = \vv, \Phi_{h}(\rvx) = S_{i}\right)
    \ = \ \underbrace{\mathbb{P}\!\left(\hat{\rh} = y' \mid \Phi_{h}(\rvx) = S_{i}\right)}_{> 0 \text{ by (1)}}\,\,
    \underbrace{\mathbb{P}\!\left(\rvf(\rvx) = \vv, \Phi_{h}(\rvx) = S_{i}\right)}_{> 0 \text{ by (2)}} \ > \ 0 .
\end{equation*}
On this set we have, for $i = 1, 2$,
\begin{equation}\label{eq:neg_within_cell}
    \mathbb{P}\!\left(\ry = y \mid \hat{\rh} = y', \rvf(\rvx) = \vv, \Phi_{h}(\rvx) = S_{i}\right)
    \ = \ \mathbb{P}\!\left(\ry = y \mid \rvf(\rvx) = \vv, \Phi_{h}(\rvx) = S_{i}\right),
\end{equation}
since conditioning further on $\hat{\rh} = y'$ adds no information about $\ry$
given $(\Phi_{h}, \rvf)$.  The two sides of \eqref{eq:neg_within_cell} differ for
$i = 1, 2$ by the differing-content requirement of (3).  Therefore conditioning
additionally on $\Phi_{h}(\rvx)$ changes the label probability beyond
$(\hat{\rh}, \rvf(\rvx))$, i.e.
\begin{equation*}
    \mathbb{P}\!\left(\ry \mid \rvf(\rvx), \hat{\rh}, \Phi_{h}(\rvx)\right) \ \neq \ \mathbb{P}\!\left(\ry \mid \rvf(\rvx), \hat{\rh}\right)
\end{equation*}
on this positive-probability set.  Finally, injectivity of $g^{*}$ gives
$\sigma(g^{*}) = \sigma(\hat{\rh}, \rvf(\rvx))$, so the displayed inequality is
exactly the failure of generalized calibration,
$\mathbb{P}(\ry \mid g^{*}, \Phi_{h}) \neq \mathbb{P}(\ry \mid g^{*})$
(Definition \ref{def:comp_cal}).  Hence $g^{*}$ is not calibrated
w.r.t.~$\Phi_{h}$.\end{proof}

\section{Proof of Theorem \ref{thm_cal_bayes_f}}

\label{thm_cal_bayes_f_proof}


\begin{proof}
Fix a label value $\ry=y$ and a realized human prediction $\rh=h$. 
By Bayes' rule,
\begin{equation*}
    \mathbb{P}\left(\ry=y \mid \rh=h, \Phi_f(\rvx)\right)
    =
    \frac{
        \mathbb{P}\left(\rh=h \mid \ry=y, \Phi_f(\rvx)\right)
        \mathbb{P}\left(\ry=y \mid \Phi_f(\rvx)\right)
    }{
        \mathbb{P}\left(\rh=h \mid \Phi_f(\rvx)\right)
    } .
\end{equation*}

It remains to simplify the likelihood term
$\mathbb{P}\left(\rh=h \mid \ry=y, \Phi_f(\rvx)\right)$.
Since the human is calibrated with respect to $\Phi_h$, the human prediction depends on
$\rvx$ through $\Phi_h(\rvx)$, so
\begin{equation*}
    \mathbb{P}\left(\rh=h \mid \ry=y, \Phi_f(\rvx), \Phi_h(\rvx)\right)
    =
    \mathbb{P}\left(\rh=h \mid \Phi_h(\rvx)\right).
\end{equation*}
Therefore, by the tower rule,
\begin{align*}
    \mathbb{P}\left(\rh=h \mid \ry=y, \Phi_f(\rvx)\right)
    &=
    \mathbb{E}_{\Phi_h}\left[
        \mathbb{P}\left(\rh=h \mid \Phi_h(\rvx)\right)
        \mid \ry=y, \Phi_f(\rvx)
    \right].
\end{align*}
Using the assumption
\[
    \Phi_h \ \bot \ \Phi_f(\rvx) \mid \ry,
\]
we can remove the conditioning on $\Phi_f(\rvx)$:
\begin{equation*}
    \mathbb{P}\left(\rh=h \mid \ry=y, \Phi_f(\rvx)\right)
    =
    \mathbb{E}_{\Phi_h}\left[
        \mathbb{P}\left(\rh=h \mid \Phi_h(\rvx)\right)
        \mid \ry=y
    \right].
\end{equation*}

Substituting this into Bayes' rule gives
\begin{equation*}
    \mathbb{P}\left(\ry=y \mid \rh=h, \Phi_f(\rvx)\right)
    =
    \frac{
    \mathbb{E}_{\Phi_h}\left[
        \mathbb{P}\left(\rh=h \mid \Phi_h(\rvx)\right)
        \mid \ry=y
    \right]
    \cdot
    \mathbb{P}\left(\ry=y \mid \Phi_f(\rvx)\right)
    }{
    \mathbb{P}\left(\rh=h \mid \Phi_f(\rvx)\right)
    }.
\end{equation*}

Since the model is calibrated with respect to $\Phi_f$,
\[
    f_y(\rvx)=\mathbb{P}\left(\ry=y \mid \Phi_f(\rvx)\right).
\]
Thus the Bayes combination posterior computed from $f(\rvx)$ and the human confusion probabilities equals the true conditional posterior:
\[
    p\left(\ry=y \mid \rh=h, f(\rvx)\right)
    =
    \mathbb{P}\left(\ry=y \mid \rh=h, \Phi_f(\rvx)\right).
\]
Since this holds for every $y$ and every realized value $h$, we obtain
\[
    p\left(\ry | \rh, f(\rvx) \right)
    =
    \mathbb{P}\left(\ry \mid \rh, \Phi_f(\rvx)\right)
    =
    \frac{
    \mathbb{E}_{\Phi_{h}}\left[
        \mathbb{P}(\rh | \Phi_{h}(\rvx)) 
        \mid \ry
    \right]
    \cdot 
    \mathbb{P}\left(\ry | \Phi_{f}(\rvx) \right)
    }{
    \mathbb{P}(\rh | \Phi_{f}(\rvx))
    }.
\]
Therefore the combination posterior is calibrated with respect to $\Phi_f$.
\end{proof}

\section{Proof of Proposition \ref{prop_post_miscal_f_bayes}}\label{sec:prop_post_miscal_f_bayes}
\begin{proof} Since $\hat{\rh}$ is sampled from $\mathbb{P}(\rh | \Phi_{h}(\rvx))$ with sampling noise independent of $\ry$, we have $\hat{\rh} \perp \ry \mid \rvx$, and therefore $\mathbb{P}(\ry = y \mid \rh, \rvx) \;=\; \mathbb{P}(\ry = y \mid  \rvx) \;=\; f_y(\rvx)$, which is $\sigma(\rvx)$-measurable. Ideal calibration of the posterior requires $p(\ry = y \mid \rvh, \rvx) = f_y(\rvx)$, which forces the posterior to be $\sigma(\rvx)$-measurable. However, by construction of the Bayes combination, $p(\ry = y \mid \rvh, \rvf(\rvx)) \;=\; f_y(\rvx) \cdot (c_{y,\rh} / \sum_{j} c_{j,\rh}\, f_j(\rvx))$, which depends on $\rh$ unless $c_{j,\rh}$ is constant in $j$ (which would contradict the supposition of non-uniform label probability across $\Phi_h$).
\end{proof}

\section{Proof of Theorem \ref{thm:cal_rejector}}\label{proof_cal_rejector}
\begin{proof}
The conditional $0$--$1$ risk at $\rvx$ in Equation \ref{eq:cal_01_risk} is
\begin{equation*}
    (1 - r(\rvx))\!\left(1 - \max_{y} f_{y}(\rvx)\right) \ + \ r(\rvx)\!\left(1 - \mathbb{P}(\rh = \ry \mid \rvx)\right),
\end{equation*}
which is minimized pointwise by $r^{*}(\rvx) = \1[\mathbb{P}(\rh = \ry \mid \rvx) > \max_{y} f_{y}(\rvx)] = \1_{\mathfrak{D}^{*}}(\rvx)$.
Subtracting the pointwise optimum yields the excess risk
\begin{equation}\label{eq:excess}
    \mathcal{R}^{0\text{-}1}_{f,h}(r) - \mathcal{R}^{0\text{-}1}_{f,h}(r^{*})
    \ = \ \mathbb{E}_{\rvx}\!\left[\, \bigl| \mathbb{P}(\rh = \ry \mid \rvx) - \max_{y} f_{y}(\rvx) \bigr|
    \cdot \1\{ r(\rvx) \neq \1_{\mathfrak{D}^{*}}(\rvx) \} \,\right] \ \ge \ 0 ,
\end{equation}
since whenever $r(\rvx)$ takes the suboptimal value it incurs the margin
$|\mathbb{P}(\rh = \ry \mid \rvx) - \max_{y} f_{y}(\rvx)|$.  As ties have probability zero,
Equation \ref{eq:excess} vanishes if and only if $r(\rvx) = \1_{\mathfrak{D}^{*}}(\rvx)$ almost everywhere.

Fix a $\tilde{\Phi}_{r}$-cell $A$.  By assumption $\Phi_{r}$ refines the model-output coordinate, so $\rvf$ is $\tilde{\Phi}_{r}$-measurable and $\max_{y} f_{y}$ is constant on $A$; write $c_{A} \triangleq \max_{y} f_{y}(\rvx)$ for $\rvx \in A$.  By the assumed calibration of $\rrho$ w.r.t.~$\tilde{\Phi}_{r}$, the score is likewise constant on $A$, with value
\begin{equation*}
\rrho_{A} \ = \ \mathbb{P}(\rh = \ry \mid \tilde{\Phi}_{r}(\rvx) = A) \ = \ \mathbb{E}[\mathbb{P}(\rh = \ry \mid \rvx) \mid \rvx \in A].
\end{equation*}
Since both $\rrho$ and $\max_{y} f_{y}$ are constant on $A$, the rejector $r(\rvx;\rrho) = \1[\rrho_{A} > c_{A}]$ is constant on $A$ as well.

\textbf{($\Leftarrow$)} Assume $\sigma(\1_{\mathfrak{D}^{*}}) \subseteq \sigma(\tilde{\Phi}_{r})$. Then each $\tilde{\Phi}_{r}$-cell $A$ lies entirely in $\mathfrak{D}^{*}$ or in $(\mathfrak{D}^{*})^{c}$. If $A \subseteq \mathfrak{D}^{*}$,
then $\mathbb{P}(\rh = \ry \mid \rvx) > c_{A}$ on $A$ (with $c_{A}$ constant on $A$), so
$\rrho_{A} = \mathbb{E}[\mathbb{P}(\rh = \ry \mid \rvx) \mid A] > c_{A}$ and
$r(\rvx;\rrho) = 1 = \1_{\mathfrak{D}^{*}}(\rvx)$ on $A$. If $A \subseteq (\mathfrak{D}^{*})^{c}$,
then $\mathbb{P}(\rh = \ry \mid \rvx) < c_{A}$ on $A$, so $\rrho_{A} < c_{A}$ and
$r(\rvx;\rrho) = 0 = \1_{\mathfrak{D}^{*}}(\rvx)$ on $A$. Hence $r(\rvx;\rrho) = \1_{\mathfrak{D}^{*}}$, and by Equation \ref{eq:excess} the excess risk is zero.

\textbf{($\Rightarrow$)} Suppose $r(\rvx;\rrho)$ is risk-minimizing. The score $\rrho(\rvx,\rvf(\rvx))$ is $\tilde{\Phi}_{r}$-measurable by the assumed calibration, and $\rvf(\rvx)$ is $\tilde{\Phi}_{r}$-measurable by the refinement hypothesis; hence
$r(\rvx;\rrho) = \1[\rrho > \max_{y} f_{y}]$ is $\tilde{\Phi}_{r}$-measurable. By Equation \ref{eq:excess} and the no-tie assumption,
$r(\rvx;\rrho) = \1_{\mathfrak{D}^{*}}(\rvx)$ almost everywhere. Therefore $\1_{\mathfrak{D}^{*}}$
agrees with a $\tilde{\Phi}_{r}$-measurable function, giving
$\sigma(\1_{\mathfrak{D}^{*}}) \subseteq \sigma(\tilde{\Phi}_{r})$.
\end{proof}

\section{Experimental Details for Bayes Combination Experiments}\label{bayes_comb_exper_details}

\subsection{ImageNet-16H} 
We use ImageNet-16H from the public OSF release of \citet{kerrigan2021}, which contains human classifications and precomputed CNN softmax outputs on a 16-class subset of ImageNet: \url{https://osf.io/2ntrf/overview}. We focus on the VGG-19 predictions at image noise level $80$ and evaluate the performance of the resulting human-AI combination as the classifier's standalone calibration changed.  For each image $\vx$, the dataset provides approximately six human class annotations. We aggregate these annotations into a count vector over the $K=16$ classes and interpret the normalized count vector as the empirical human response distribution for that image. To match the theoretical setting in which we observe a sampled human prediction rather than the human's full internal distribution, we draw a single hard human label for each image.  

We split the examples into a calibration subset and a held-out evaluation subset. For each random run, we use a stratified $70/30$ split. The calibration subset is used only to estimate the human confusion matrix $\mC$, while the held-out subset is used to evaluate the standalone classifier and the combined predictor. In all ImageNet-16H experiments reported here, we use all calibration examples to estimate $\mC$. We repeat the procedure over multiple random seeds, where each run re-samples the calibration/evaluation split and independently samples the hard human labels on both subsets.  

To vary the calibration of the classifier without changing its learned representation, we apply fixed temperature scaling to the VGG-19 softmax probabilities. Smaller temperatures sharpen the probability vector, whereas larger temperatures flatten it. We sweep prespecified grids of temperatures and evaluate both the standalone classifier and the confusion-matrix combined predictor using the same held-out examples.

We report three quantities: mass-binned expected calibration error, negative log-likelihood, and accuracy. For ECE, we use $15$ equal-mass confidence bins. The main plots place standalone model ECE on the horizontal axis and a combined-predictor metric on the vertical axis. Each point corresponds to one fixed temperature $T$, and the plotted value is averaged across random runs.

\subsection{HAM10000}
We use the HAM10000 skin-lesion classification dataset to construct a controlled setting in which the simulated expert prediction can depend on different information sources. The training set is used to train the image classifier and expert predictors, the calibration set is used to estimate expert confusion matrices, and the test set is used for final evaluation. Each example contains a dermoscopic image and patient- or lesion-level meta-data. We identify the image with the model-observed feature $\rvx$ and treat the meta-data as privileged information $\rvz$ that is unavailable to the base classifier but may be available to an expert. We simulate the expert prediction using trained auxiliary predictors. This allows us to compare several feature-access regimes while keeping the same confusion-matrix update framework.

The model is a temperature-scaled ResNet-18 trained on $224 \times 224$ dermoscopic images for 8 epochs. We compare four expert settings. The first, labeled \emph{Image}, uses the same ResNet-18 output as the main classifier, representing an identical-image, duplicated-evidence expert. The second, labeled \emph{Weaker Image}, is a separate weaker ResNet-18 trained on $128 \times 128$ images for 4 epochs, representing a shared-feature setting with a different image predictor. The third, labeled \emph{Image and Meta-Data}, is a multimodal expert consisting of a ResNet-18 image encoder and a meta-data MLP, representing a partial-overlap setting because it uses both the image and patient meta-data. The fourth, labeled \emph{Meta-Data Only}, is a logistic regression expert trained on age, sex, and lesion localization. In contrast to the previous three experts, it depends only on $\rvz$ and not on the image feature $\rvx$, making it the most independent expert relative to the image classifier.

For each expert, we treat its softmax output as the expert's internal predictive distribution and sample hard labels from it. Thus, as in the ImageNet-16H experiment, the combination method observes only a sampled categorical expert prediction rather than the full expert distribution. On the calibration split, we sample four hard expert labels per example to estimate $\mC$ more stably. On the test split, we sample one hard expert label per example and combine it with the temperature-scaled image classifier.  As before, we vary only the temperature of the base image classifier and keep the expert distributions fixed. This isolates how the calibration of model affects the resulting combined predictor under different expert information structures. In all plots, the horizontal axis is the mass-binned ECE of the standalone image classifier, and the vertical axis is the metric of the combined predictor. For all experiments, we average over $5$ random sampling runs. 

\subsection{CIFAR-10H}
\label{app:cifar10h_validation}

\begin{figure}[ht]
    \centering
    \begin{subfigure}[b]{0.329\textwidth}
        \centering
        \includegraphics[width=\linewidth]{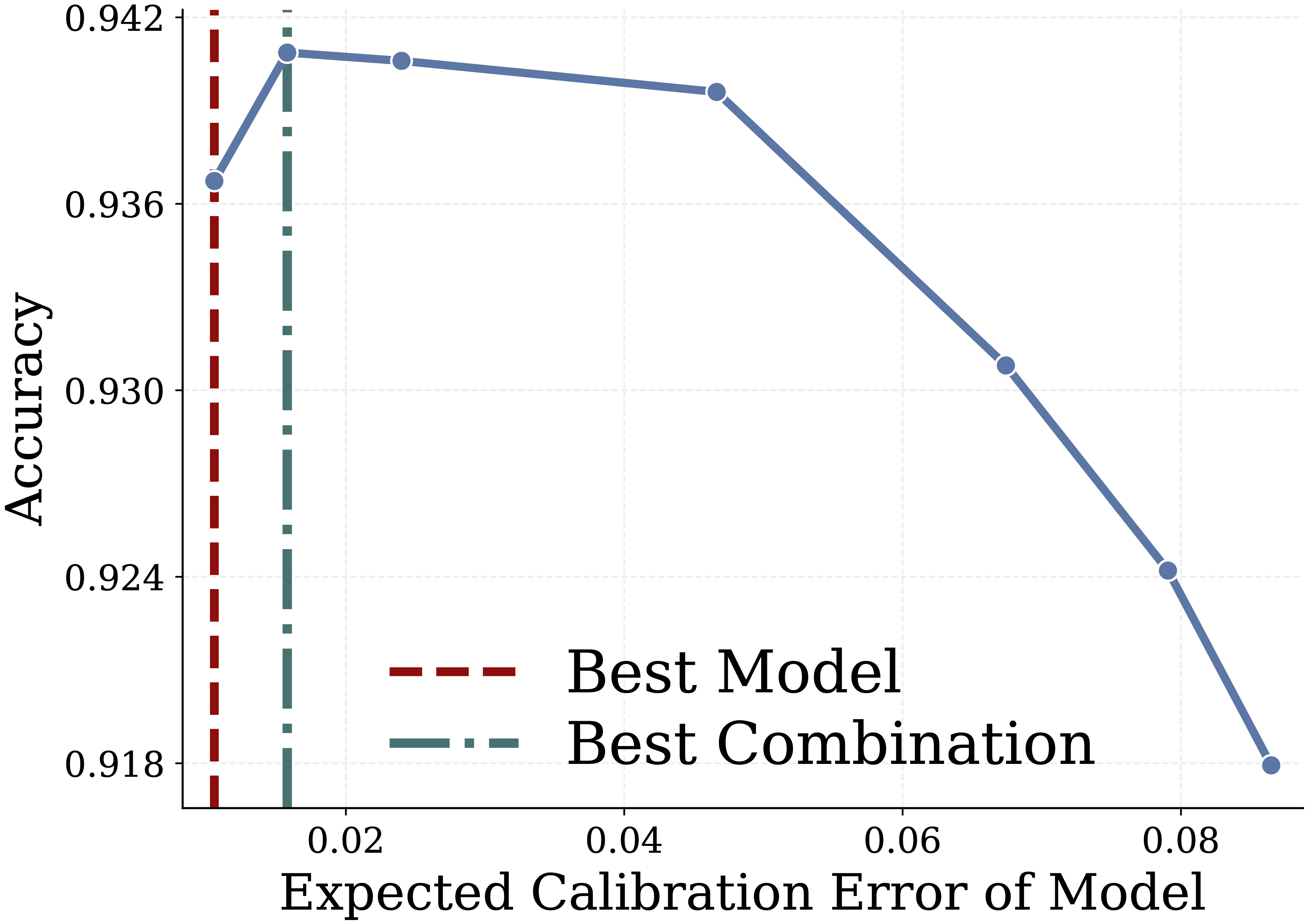}
        \caption{CIFAR-10H: Accuracy}
        \label{fig:cifar10h_acc}
    \end{subfigure}
    \begin{subfigure}[b]{0.329\textwidth}
        \centering
        \includegraphics[width=\linewidth]{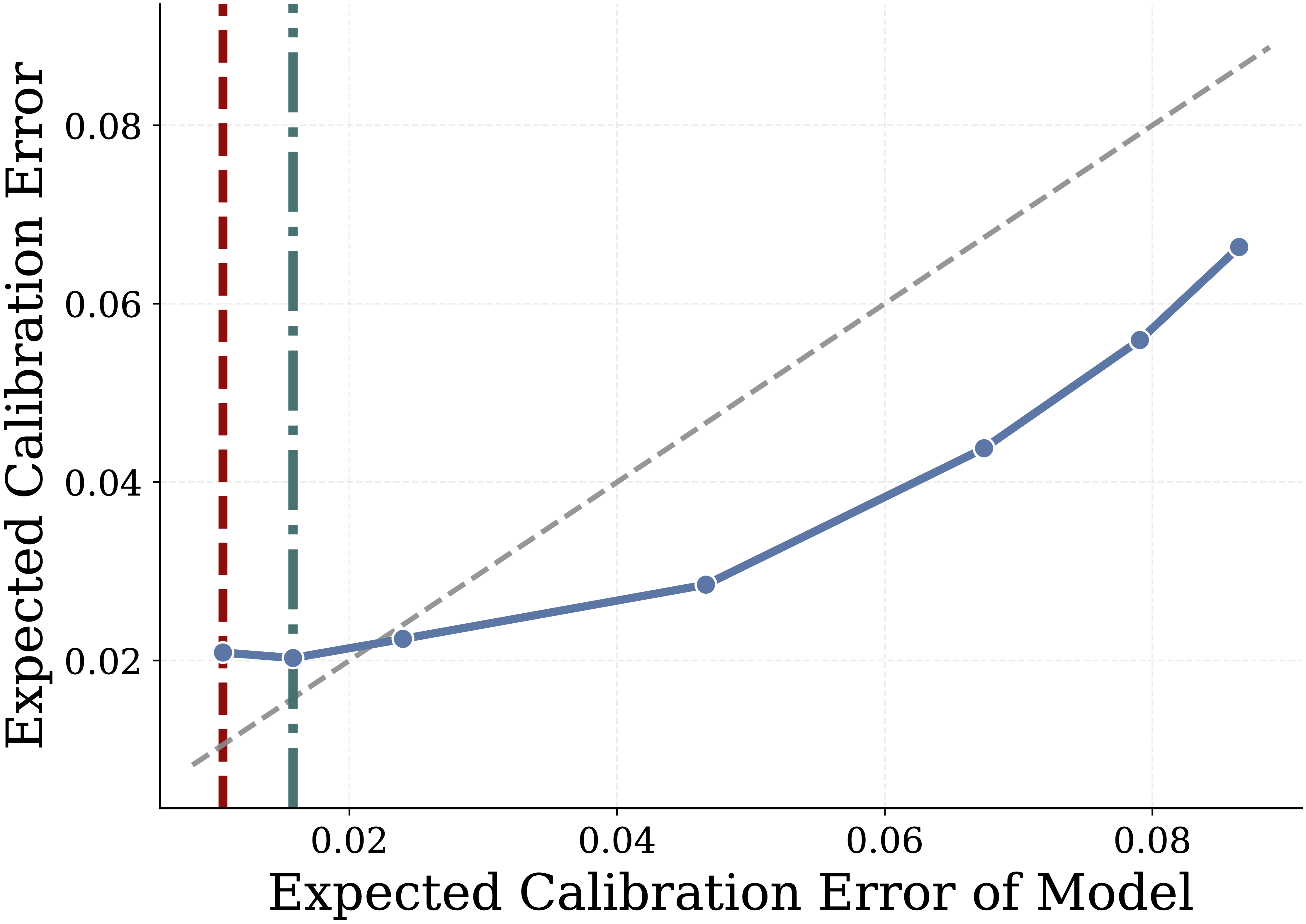}
        \caption{CIFAR-10H: ECE}
        \label{fig:cifar10h_ece}
    \end{subfigure}
    \begin{subfigure}[b]{0.329\textwidth}
        \centering
        \includegraphics[width=\linewidth]{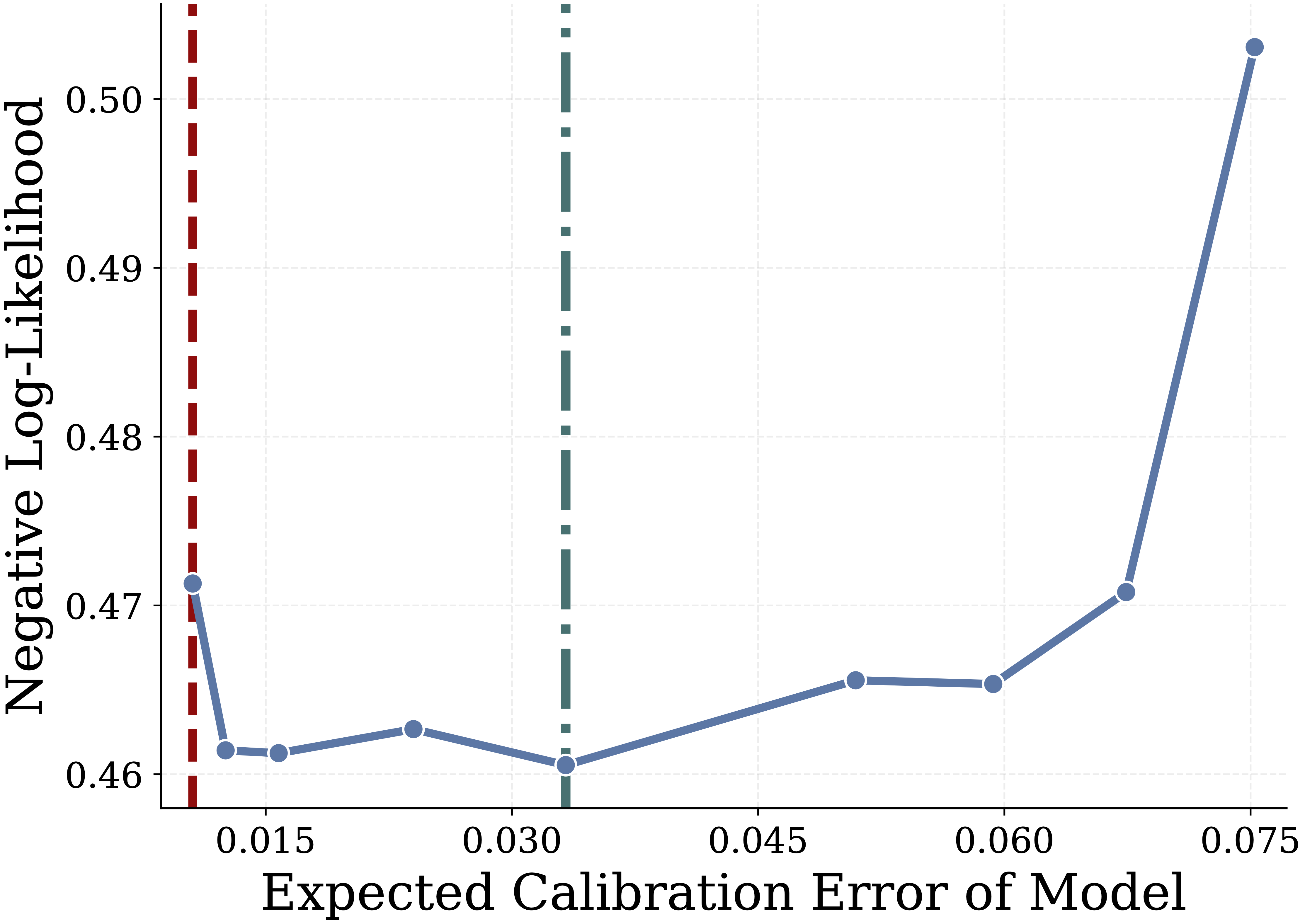}
        \caption{CIFAR-10H: Neg. LL}
        \label{fig:cifar10h_nll}
    \end{subfigure}
    \caption{\textit{Additional Validation on CIFAR-10H.}
    We vary the calibration of a ResNet-18 classifier by temperature scaling and combine each temperature-scaled classifier with a sampled human label using the confusion-matrix update. The human label is sampled from the CIFAR-10H empirical human distribution after flattening with human temperature $T_h=2.0$. The horizontal axis reports the standalone model's mass-binned ECE, while the vertical axes report the combined predictor's accuracy, ECE, and NLL.}
    \label{fig:cifar10h_appendix}
\end{figure}

We additionally evaluate the confusion-matrix combination method on CIFAR-10H~\citep{peterson2019human}, a human uncertainty dataset built on the CIFAR-10 test set. CIFAR-10H provides human label distributions for each image, which allows us to sample hard human predictions while preserving real human perceptual uncertainty.

We use resnet80\_mid\_e30 as the base image classifier. For each image, we sample one hard human label from the empirical CIFAR-10H human label distribution. To study a weaker-human setting, we flatten the empirical human distribution using a human temperature of $T_h=2.0$ before sampling. We do not add extra random label corruption. Thus, the sampled human labels remain tied to the real per-image human uncertainty in CIFAR-10H, but with reduced confidence.

For each run, we split the data into a calibration subset and a held-out evaluation subset using a stratified $70/30$ split. The calibration subset is used to estimate the human confusion matrix $\mC$, and the held-out subset is used to evaluate both the standalone temperature-scaled classifier and the combined predictor. We use $5{,}000$ calibration examples to estimate $\mC$ and average results over five random runs. In each run, the train-test split and the sampled human labels are regenerated. We vary only the classifier temperature while keeping the human sampling procedure fixed. We report combined accuracy, combined ECE, and combined NLL as functions of the standalone model's mass-binned ECE. The results in Figure~\ref{fig:cifar10h_appendix} provide an additional real-human-label validation that the best standalone model calibration need not correspond to the best combined predictor.
\end{document}